\DeclareMathOperator*{\Expect}{\mathbb{E}}
\newcommand{\ignore}[1]{}
\newcommand{\eh}[1]{\noindent{\textcolor{magenta}{\{{\bf EH:} \em #1\}}}}
\theoremstyle{plain}
\newtheorem{theorem}{Theorem}
\newtheorem*{theorem*}{Theorem}
\newtheorem*{lemma*}{Lemma}
\newtheorem*{corollary*}{Corollary}
\newtheorem*{proposition*}{Proposition}
\newtheorem*{claim*}{Claim}
\newtheorem*{fact*}{Fact}
\newtheorem*{observation*}{Observation}
\newtheorem*{assumption*}{Assumption}
\theoremstyle{definition}
\newtheorem*{definition*}{Definition}
\newtheorem*{remark*}{Remark}
\newtheorem*{example*}{Example}
 \theoremstyle{plain}
\newtheorem*{theoremaux}{\theoremauxref}
\gdef\theoremauxref{1}
\DeclareMathAlphabet{\mathbfsf}{\encodingdefault}{\sfdefault}{bx}{n}
\newcommand{\reals}{\mathbb{R}}
\newcommand{\eps}{\varepsilon}
\renewcommand{\leq}{~\le~}
\renewcommand{\geq}{~\ge~}
\let\oldtfrac\tfrac
\renewcommand{\tfrac}[2]{\smash{\oldtfrac{#1}{#2}}}
\let\nablaold\nabla
\renewcommand{\nabla}{\nablaold\mkern-2.5mu}
\title{SpectraLDS: Provable Distillation for \\ Linear Dynamical Systems}
\author{Devan Shah$^{1}$
\And Shlomo Fortgang$^{1}$
\And Sofiia Druchyna$^{1}$
\And Elad Hazan$^{1,2}$ \\
  \\
  $^1$Computer Science Department, Princeton University \\
  $^2$Google DeepMind Princeton \\}
\begin{document}

\maketitle

\begin{abstract}
We present the first provable method for identifying symmetric linear dynamical systems (LDS) with accuracy guarantees that are independent of the systems' state dimension or effective memory. Our approach builds upon recent work that represents symmetric LDSs as convolutions learnable via fixed spectral transformations. We show how to invert this representation, thereby recovering an LDS model from its spectral transform and yielding an end-to-end convex optimization procedure. This distillation preserves predictive accuracy while enabling constant-time and constant-space inference per token, independent of sequence length. We evaluate our method, SpectraLDS, as a component in sequence prediction architectures and demonstrate that accuracy is preserved while inference efficiency is improved on tasks such as language modeling.

\end{abstract}

\section{Introduction}

The emergence of attention-based transformer architectures has revolutionized sequence modeling tasks, particularly in
natural language processing \cite{vaswani2017attention} and large-scale sequence-to-sequence learning
\cite{huang2022annotatedtransformer,openai2024gpt4technicalreport}. These transformer models rely on the
self-attention mechanism, which allows each token in a sequence to attend to every other token, enabling strong
contextual understanding. However, this approach suffers from quadratic complexity in sequence length, making it computationally expensive for longer sequences. Recent research has thus explored alternative, more efficient architectures that preserve expressiveness while reducing
computational costs for long sequences. Among these are attention-free approaches such as convolution-based or
state-space models (SSMs) \cite{gu2024mamba, agarwal2024spectralstatespacemodels, liu2024flashstu, massaroli2023laughinghyena}, which can offer sub-quadratic or even near-linear time generation.

The most basic SSM, which is the starting point for all aforementioned models, are linear dynamical systems (LDS), a foundational framework for modeling sequential dependencies
in control theory, signal processing, and learning theory \cite{kalman1960new,hazan2022introduction}:
\begin{align}
x_t = A x_{t-1} + B u_t, \quad \hat{y}_t = C x_t + D u_t.
\end{align}
Here \( u_t \) represents the input sequence, \( x_t \) encodes the past state information, and \( \hat{y}_t \) approximates
the target sequence. By maintaining a fixed latent state, LDSs enable efficient inference and allow for efficient memory utilization. However, gradient-based approaches for learning LDSs suffer from exploding or vanishing gradients, particularly when modeling systems with long-term memory or large hidden state
dimension \cite{hazan2017learninglineardynamicalsystems}.

To address these limitations, Agarwal et al.~\cite{agarwal2024spectralstatespacemodels} leveraged the spectral filtering method 
\cite{hazan2017learninglineardynamicalsystems} and introduced the Spectral Transform Unit (STU), a convex relaxation
that shifts from learning the hidden transition matrix \(A\) directly to a reparameterization of the LDS impulse response that learns how input signals convolve with fixed
spectral filters. This approach provably preserves the expressiveness of an LDS with symmetric $A$ while making the training problem far more tractable. In practice, it has proven robust for systems requiring long-range memory and empirical results show that hybrid STU models—with alternating attention and STU layers—can match or even surpass purely attention-based architectures, as well as other popular hybrid state-space models, on long-context tasks such as language modeling and time-series prediction \cite{liu2024flashstu}. Furthermore, unlike self-attention, which requires \(O(L^2)\) operations per $L$ tokens during training and identically for inference with KV-caching, the STU operates in \(O(L \log L)\) operations per $L$ tokens during training and $O(L \log^3 L)$ operations per $L$ tokens during inference through algorithms based on the Fast Fourier Transform
\cite{agarwal2024futurefillfastgenerationconvolutional, liu2024flashstu}.

However, although STUs capture LDS-like dynamics using spectral filtering, there has been no straightforward way to convert (or ``distill'') a trained STU layer back into explicit LDS form. Moreover, it has remained unclear whether every STU layer can be represented as a LDS with provably small error. Such a distillation from an STU layer to an LDS layer with hidden-dimension $h$ would permit recurrent inference in place of convolution operations, reducing the cost of generating $L$ tokens during inference to $O(h \cdot L)$, while maintaining the training robustness guarantees of the STU.

Such a distillation is especially desirable in the light of the resurgence of SSMs as scalable alternatives to Transformers in long-context tasks ~\cite{gu2024mamba,gu2022efficientlymodelinglongsequences, poli2023hyena}. Thus, our work allows one to retain the {STU’s performance on long sequences} while
enabling \textbf{logarithmic-time per-token generation}—providing an appealing alternative to both long convolutions and
Transformer-based self-attention caching.

\subsection{Our Contribution}

We present a novel technique for distilling STU filters into LDS form, achieving a substantial reduction in operations—from $O(\log^3 L)$ to $O(\log L)$ per token\footnote{We expect the actual reduction to be much larger:  from $\sqrt{L}$, which is a practical method for generation, as opposed to $\log^3 L$, which is a complex algorithm, especially to implement efficiently on a GPU.} during generation—while preserving the STU's expressivity and performance. Moreover, due to the training stability guarantees of the STU architecture, even when learning a marginally stable symmetric LDS or an LDS with large hidden dimension, this distillation procedure provides the \textbf{first provable method} to directly learn the parameters of a \textbf{symmetric LDS of arbitrarily high effective memory and with bounded noise}. Specifically:
\begin{itemize}
  \item In Algorithm \ref{alg:spectralds}, we show how to convert a learned STU into explicit system matrix parameters whose recurrence can be computed in logarithmic time.
  
  \item We provide a theoretical analysis of this distillation in Theorem \ref{thm:Theorem1}, and empirically demonstrate that the new LDS form incurs negligible degradation in modeling quality while improving the autoregressive generation speed.
  
  \item We demonstrate in Section \ref{sec:experiments} that by applying SpectraLDS to trained STUs, we maintain the same accuracy and attain $O(\log L)$ per-token computational generation cost.

  \item As a consequence of our theoretical analysis, we show in Section \ref{sec:lds2lds} how we can efficiently convert a symmetric LDS of {\bf arbitrary high dimension}, via an intermediate STU learning step, to one of $O(\log L)$ dimension with minimal error. 

\end{itemize}

We open-source the SpectraLDS code at \url{https://github.com/dshah02/SpectraLDS}.

\section{Related Work}

State-space models have long been a cornerstone of control theory and signal processing. The simplest variants, linear dynamical systems, provide a succinct way to capture temporal dependencies via a hidden state that evolves with linear transitions. Classical methods include the Kalman filter \cite{kalman1960new}, which remains widely used due to its robust theoretical properties and computational efficiency.

In recent years, a new wave of SSMs have emerged as efficient alternatives to attention-based methods for long-sequence tasks, promising sub-quadratic or even near-linear complexity without compromising expressive power. Models like S4 \cite{gu2022efficientlymodelinglongsequences} and its diagonal variants \cite{gu2022parameterizationinitializationdiagonalstate} exploit structured state matrices to learn long-range dependencies, while works such as Hyena, Mega, and Mamba \cite{poli2023hyena, ma2023megamovingaverageequipped, gu2024mamba} incorporate gating or convolution-based parameterizations to compete against (and sometimes outperform) transformers in language modeling, time-series analysis, and other long-context applications. 
Their growing popularity and the challenges of training SSMs underscore the need for methods with greater training robustness, efficiency, and performance guarantees.
The Spectral Transform Unit (STU) \cite{agarwal2024spectralstatespacemodels} lies squarely in this tradition, offering a powerful convex relaxation for training LDS-like systems that achieves impressive empirical results on long-context tasks. Our work builds directly on this line of research, introducing the first method to distill a learned STU layer into an explicit LDS with provable guarantees, thereby unifying the convex training advantages of spectral filtering with the real-time inference benefits of a recurrent LDS.

Our contributions also align with the long tradition of system identification for LDSs, where the aim is to learn the hidden transition and output matrices \((A,B,C,D)\) from observed sequences. Early influential approaches—such as the Ho–Kalman algorithm \cite{ho1966effective}, Kung’s method \cite{kung1978new}, and the Eigensystem Realization Algorithm (ERA) \cite{juang1985eigensystem} —rely on linear-algebraic decompositions (e.g., SVD) of Hankel matrices consistent with observations. Modern variants allow for single-trajectory identification \cite{oymak2019non}, and subsequent refinements like MOESP and N4SID \cite{jamaludinMOESPandN4SId} added stochastic noise modeling, while prediction-error and maximum-likelihood methods improved estimation accuracy and statistical efficiency. More recent lines of work incorporate regularization and spectral methods (e.g., stable-spline kernels, sparse identification) to yield more robust or interpretable LDS representations.  In the context of distillation, these methods still require matrix decompositions that scale super-linearly in the problem size.

In contrast to many previous approaches, since the STU's parameterization avoids direct reconstruction of the system matrix \(A\), distillation from a STU layer into an LDS remains agnostic to hidden dimension. Moreover, as our method uses a fixed and abstract Hankel matrix—rather than having to construct it anew from observed data— we can perform a significant part of the distillation computation offline. 

Finally, we highlight the recent closely related work of  \cite{massaroli2023laughinghyena}, which distills state-space-like recurrences from convolution-based sequence models. Their \textit{Laughing Hyena} method accelerates long convolutional filters by approximating them with a diagonal LDS, thus allowing constant-time generation at inference. While this approach generalizes to any convolution-based model (e.g., Hyena~\cite{poli2023hyena}), it does not provide formal guarantees on distillation quality. In contrast, we focus on the STU’s spectral filters, which have expressive power comparable to a symmetric LDS with real eigenvalues, and present the first theoretical framework to convert the filters into such an LDS with provable bounds on distillation quality. By leveraging the STU’s fixed bank of spectral filters, our method preserves long-sequence expressiveness while achieving a symmetric LDS realization with a guaranteed approximation error (see Section~\ref{sec:alg_result}).

\section{Token Generation and Complexity for Language Modeling}
  
In this section, we summarize the autoregressive generation costs for three model classes, Transformers, Convolutional Models, and RNNs, considering a prompt length of $T$ and the generation of $K$ tokens by each model, with $L$ the length of the convolutional filters (i.e., the maximum sequence length). We show the runtimes and memory requirements for each of the listed models in Table \ref{tab:models_runs_memory}.

\textbf{Attention. } Processing a prompt requires $O(T^2)$ time. However, token-by-token generation can be accelerated to 
$O(T + K)$ for each generated token via key-value caching, with a total of $O(T^2 + K(T+K)) = O(T^2 + K^2)$ operations and requiring $O(T + K)$ space \cite{vaswani2017attention,massaroli2023laughinghyena}.

\textbf{Convolutional Model. } A Convolutional Model with $k$ convolutional filters will require $O(k N)$ operations to autoregressively generate a new output given $N$ inputs, and thus a naive autoregressive convolutional implementation will require $O(k \cdot K \cdot (T + K))$ operations to generate $K$ tokens.
A more refined ``Epoched Future Fill'' algorithm with prompt prefilling can reduce this to \\ $O(k \cdot (T \log T  + K^{3/2}\sqrt{\log K}))$ to generate $K$ tokens. The "Continuous Future Fill" algorithm has theoretical guarantees of $O(k \cdot (T \log T + K \log^2 K))$ operations, although it suffers from numerical instability and has not been implemented or used in practice \cite{agarwal2024futurefillfastgenerationconvolutional}. For the guarantees of the STU architecture, we require $k = O(\log L)$ and in practice we choose $k = 24$ \cite{liu2024flashstu}. 

\textbf{RNN (LDS).   }  For an RNN with state dimension $h$, autoregressive generation requires $O(h)$ operations per token generated and $O(h)$ memory, allowing generation of $K$ tokens with $O(h \cdot (T + K))$ operations. As we will prove, for an LDS with representation capacity comparable to an STU with $k$ filters, we require $h = O(k) = O(\log L)$.

\begin{footnotesize}
\begin{table}[ht]
    \centering
    \begin{tabular}{lccc}
        \toprule
        \textbf{Method} & \textbf{Prefill + Generation Runtime} & \textbf{Cache Size} & \textbf{Runtime with $K,T = O(L)$} \\
        \midrule
        Standard Conv
          & $(TK + T \log T + K^2) k $
          & $T + K$ 
          & $L^2 \log L$ \\
          
        Standard Attn.
          & $T^2 + K^2$
          & $T+K$ 
          & $L^2$ \\
        EpochedFF
          & $(T \log T + K^{3/2}\sqrt{\log K}) k $
          & $K$
          & $L^{3/2}(\log L)^{3/2}$ \\
        ContinuousFF
          & $(T \log T + K \log^2 K) k $
          & $K$ 
          & $L \log^{3} L$\\
        \textbf{SpectraLDS} (ours)
          & $(T + K) h $
          & $h$ 
          & $L\log L$\\
        \bottomrule
    \end{tabular}
    \vspace{4pt}
    \caption{Comparison of architecture runtime and memory requirements for generating \(K\) tokens from a length \(T\) prompt with $k,h = O(\log L)$, where \(O(\cdot)\) is omitted for brevity.
    }
    \label{tab:models_runs_memory}
\end{table}
\end{footnotesize}

\section{Problem Background}
\label{sec:background}

In this section, we survey the fundamentals relevant to our approach. First, we discuss linear dynamical systems and the inherent challenges of training them directly on tasks requiring long memory. We then present the main theoretical results of spectral filtering and outline how the Spectral Transform Unit (STU) leverages fixed spectral filters to model linear recurrences without explicitly learning the transition matrix \(A\). Finally, we set the stage for our method of distilling STUs back into linear dynamical systems.

\subsection{Linear Dynamical Systems}
\label{sec:lds_background}

Linear dynamical systems (LDS) have been widely used in control theory to represent time-dependent processes, forming the basis of classical state-space formulations and optimal control methods \cite{kalman1960new, luenberger1971introduction, anderson1971linear, kailath1980linear, bertsekas2005dynamic}. Concretely, we
consider an input sequence
\(
u_1, u_2, \dots, u_t \;\in\; \mathbb{R}^n,
\)
and the corresponding output sequence
\(
y_1, y_2, \dots, y_t \;\in\; \mathbb{R}^m.
\)
The hidden state \(x_t \in \mathbb{R}^d\) summarizes the system’s memory of past inputs, with the evolution of the system being represented as
\begin{align}
x_t = A x_{t-1} + B u_t, \quad y_t = C x_t + D u_t.
\end{align}
where \(A \in \mathbb{R}^{d \times d}\), \(B \in \mathbb{R}^{d \times n}\), \(C \in \mathbb{R}^{m \times d}\), and
\(D \in \mathbb{R}^{m \times n}\). 
We omit dynamics and observation noise terms for simplicity with this derivation, although we test our methods on signals with noise. 

\paragraph{Expanding the LDS to the Convolutional Form.}
In a noiseless environment with $x_0 = \vec{0}$, we can expand the LDS equations as follows:
\[
y_t = C x_t + D u_t = C\Bigl(A x_{t-1} + B u_t\Bigr) + D u_t = \cdots = \sum_{i=0}^{t-1} C A^i B\, u_{t-i} + D u_t.
\]
Note that if any eigenvalue \(|\lambda_i(A)|>1,\) the system becomes unstable and \(y_t\) may tend to infinity in magnitude. Even for \(|\lambda_i(A)|<1,\) systems with \(\|A\|\approx1\) are prone to failure due to large \(A^i\) powers in backpropagation, as a noisy algorithm may approximate $A$ with spectral radius greater than $1$ during training. If \(|\lambda_i(A)|<1-\delta\) for some spectral gap \(\delta>0\), then 
\[
  y_t
  \;=\;
  \sum_{i=0}^{\tau} C\,A^i\,B\,u_{t-i}
  \;+\;\eps_\tau,
  \quad
  \|\eps_\tau\|\;\le\;\varepsilon,
\]
where \(\tau = O\bigl(\tfrac{1}{\delta}\log\tfrac{1}{\varepsilon}\bigr)\) and the effective memory is thus on the order \(\tfrac{1}{\delta}\) ~\cite{agarwal2024spectralstatespacemodels}. As \(\delta\to0\), learning \(A\) directly becomes unstable for large contexts \cite{bengio94learning,pascanu13difficulty,osg2023resurrecting}, highlighting
the need for methods such as spectral filtering. Since the \(D\) matrix serves as a skip connection that does not affect the representation capacity, we fix it to a $0$-matrix and omit its consideration for the remainder of this paper. We will sometimes use the shorthand $\text{LDS}(C, A, B)$ to refer to a linear dynamical system with those parameters and $D = 0$.

Additionally, for the remainder of this paper, we restrict our attention to systems where $A$ is a symmetric real matrix. An LDS with symmetric $A$ can be diagonalized over the real numbers, making it equivalent to an LDS with a diagonal $A$. Without loss of generality, we therefore assume $A$ is diagonal with eigenvalues $\alpha_1, \dots, \alpha_d$.

\paragraph{Spectral Filtering.}
With initial state $x_0 = \vec{0}$, defining \(\mu(\alpha)\!=(1,\alpha,\alpha^2,\dots)\), and indexing the columns of $C$ as $c_\ell$ and the rows of $B$ as $b_\ell$, we can extend the convolutional representation:
\begin{align*}
y_t &= \sum_{i=0}^{t-1} C\,A^i\,B\,u_{t-i} = \sum_{i= 0}^{t-1} C\left(\sum_{\ell = 1}
^d \alpha_{\ell}^i (e_\ell \otimes e_\ell) \right)B{u_{t-i}} = \sum_{\ell= 1}^d (c_\ell \otimes b_\ell)\sum_{i=1}^{t} \mu(\alpha_\ell)(i) \cdot u_{t-i+1}
\end{align*}
To circumvent the non-convex optimization problem of finding $\alpha$ that best fit an LDS, \cite{hazan2017learninglineardynamicalsystems} propose the
spectral filtering algorithm, which learns an approximation of $\mu(\alpha)$ in a convex manner. They prove that given eigenvalue-eigenvector pairs \(\{\sigma_j,\phi_j\}_{j}\) of the Hankel matrix $Z$,
\[
Z \;:=\;
\int_{0}^{1}
\mu(\alpha)\,\mu(\alpha)^\top\, d\alpha, \quad Z_{i,j} := \frac{2}{{}(i+j)^3 - (i+j)}
\]

any \(\mu(\alpha)\) with $0 \leq \alpha \leq 1$ can be approximated by the top \(k\) eigenvectors
\(\{\phi_1,\dots,\phi_k\}\) of $Z$ with an exponentially decreasing error in \(k\).
Thus, if $y_t$ is generated by a PSD linear dynamical system, we have the following result:

$$y_t \approx \sum_{\ell= 1}^d (c_\ell \otimes b_\ell)\sum_{i=1}^{t} \tilde{\mu}(\alpha_\ell)(i) \cdot u_{t-i+1} = 
\sum_{j=1}^k
M_j
\Bigl(
\sum_{i=1}^{t} 
\phi_j(i) \cdot u_{t-i+1}
\Bigr)$$

where we define $\tilde{\mu}(\alpha) := \sum_{i=1}^k \langle \mu(\alpha), \phi_i\rangle \phi_i$ and learn suitable parameters $M_j \in \mathbb{R}^{m \times n}$. 
Rather than depending on powers of \(A\), learning an LDS with this parameterization remains convex in \(\{M_j\}\), since eigenvectors \(\{\phi_j\}\) from the matrix $Z$ are computed offline.

\paragraph{The Spectral Transform Unit.}
To account for learning negative eigenvalues of \(A\), the spectral filtering construction can be adapted by introducing positive and negative sets of feature maps. If \(\{\sigma_j,\phi_j\}\) are the eigenvalue-eigenvector pairs of \(Z\), then for each time \(t\) and each respective filter \(\phi_j\), 
we define the projections of the inputs onto the spectral basis:
\[
U_{t,j}^+ = \sum_{i=1}^{t} u_{t-i+1} \cdot \phi_j(i),
\qquad
U_{t,j}^- = \sum_{i=1}^{t} u_{t-i+1} \cdot (-1)^{i-1} \cdot \phi_j(i).
\]

One then forms the output by learning linear combinations of both \(U_{t,k}^+\) and \(U_{t,k}^-\) and an optional autoregressive term ~\cite{hazan2018spectral,agarwal2024spectralstatespacemodels}:
\begin{equation}
y_t^{\text{SF}}
\;=\;
\underbrace{
\sum_{j=1}^k 
  M_j^{\phi+}\,U_{t-2,j}^+ 
\;+\;
\sum_{j=1}^k 
  M_j^{\phi-}\,U_{t-2,j}^-}_{\text{Spectral Filtering component}}
\;\; + 
\underbrace{
   \hat{y}_{t-2} \;+\; \sum_{i=1}^{3} M_i^u\,u_{t+1-i}
}_{\text{AR component}}
\end{equation}

Without the autoregressive component we compute $y_t^{\text{SF}}
=
\sum_{j=1}^k 
  M_j^{\phi+}\,U_{t,j}^+ 
+
\sum_{j=1}^k 
  M_j^{\phi-}\,U_{t,j}^-$. The above expression is considered the \textit{Spectral Transform Unit}, where $\{M_j\}$ is the set of parameters to be learned using a differentiable algorithm. Following \cite{liu2024flashstu}, we consider the STU without the autoregressive component. Empirical evidence \cite{liu2024flashstu} shows that hybrid STU models can compete with or even outperform purely attention-based architectures.

\paragraph{Error Bounds for Spectral Approximation.} 
We repeat the result from \cite{hazan2017learninglineardynamicalsystems} (see also \cite{agarwal2024spectralstatespacemodels,marsden2024provable,marsden2025universal}) stating that given any LDS parameterized by \(A, B, C\) where
\(A\) is a symmetric matrix with \(\|A\|\le 1\), there exist matrices 
\(M_1^{\phi+}, \ldots, M_k^{\phi+}, M_1^{\phi-}, \dots, M_k^{\phi-}\) such that for all \(L\) and for all input sequences 
\(u_1, \ldots, u_L\), with $\|u_t\| \le 1$, the following holds for all \(t \in [L]\): 
\[
\| y_t^{\mathrm{LDS}} - y_t^{\mathrm{SF}} \| 
\;\sim\; e^{-\,\frac{k}{\log L}}.
\]

where $k$ is the number of spectral filters, \(y_t^{\mathrm{LDS}}\) is the sequence generated by the LDS, and \(y_t^{\mathrm{SF}}\) is the sequence generated by spectral filtering. 

Therefore, one can approximate any LDS that meets these specifications up to error \(\varepsilon\) by selecting \(k=O\bigl(\log L \,\log\bigl(\tfrac{1}{\varepsilon}\bigr)\bigr)\) spectral filters. Thus, the STU can capture LDS dynamics with only a logarithmic number of filters \(k\), providing a compact and stable representation even for systems with high effective memory (\(\|A\|\approx1\)).

\ignore{
\eh{The paragraph below is IMO not needed.}
\paragraph{STU Tensor-Dot Approximation}
\cite{agarwal2024spectralstatespacemodels} introduced an important optimization, the tensordot approximation, or STU-T,
wherein each tensor \(M_i \in \mathbb{R}^{d_{\mathrm{in}} \times k \times d_{\mathrm{out}}}\) is learned as
\(M_i^{(1)} \times
M_i^{(2)}\) for $M_{i}^{(1)} \in \mathbb{R}^{d_{\mathrm{model}} \times k}$ and $M_i^{(2)}\in \mathbb{R}^{d_{\mathrm{model}} \times d_{\mathrm{out}}}$.
This approximation allows for a reduction in convolutions, decreasing computational cost by a factor of \(k\).
Although this method can reduce expressivity and does not inherit the same marginal-stability
guarantees, it maintains competitive empirical performance while yielding significant improvements in efficiency. \cite{liu2024flashstu} leveraged the tensor-dot approximation to scale the STU unit to language tasks in their Flash-STU architecture. [perhaps reintroduce future fill here]
}

\subsection{Distilling STU into an LDS}
\label{sec:distill_background}
While the STU avoids direct learning of \(A\), it still implements LDS-like dynamics via spectral filtering. A natural question is whether we can recover explicit parameters \((\widetilde{A},\widetilde{B},\widetilde{C})\) from the learned STU, enabling an equivalent recurrence rather than \(O(L)\) costs when convolving spectral filters with input sequences. This can be achieved by approximating the convolution kernel of the STU by the implicit convolution kernel of an LDS. Such a distillation would bridge the gap between the stable convex training of the STU and the fast inference of a recurrent LDS.

\section{Algorithm and Main Result}
\label{sec:alg_result}

We now present our main theoretical result and accompanying algorithm, which shows how to recover
an accurate LDS representation from the learned spectral filters. Concretely, we demonstrate that
each STU filter $\phi_j$ can be approximated by a linear combination of geometrically decaying LDS impulse response filters (i.e., the LDS implicit convolutional kernel).

\subsection{A General Transformation from Spectral Filters to LDS}

Our main result is given in Algorithm \ref{alg:spectralds}. For the dynamics $\textrm{LDS}(1-\alpha, \alpha, 1)$, we denote the impulse response filter by 
\[
\mu_L(\alpha) = (1-\alpha)[1 \quad \alpha \quad \alpha^2 \quad \ldots \quad \alpha^{L-1}].
\]
Note that $\mu_L(\alpha) * u_{1:L} = \sum_{i = 1}^L(1-\alpha) \alpha^{i-1}\cdot u_{L-1} = \textrm{LDS}(1-\alpha, \alpha, 1)(u_{1:L})$, and thus inference with the LDS is the same as convolution with its impulse response filter. Let $\phi_1,\dots,\phi_k \in \mathbb{R}^L$ be the spectral filters of length $L$. We write the first $k$ filters in matrix form as $\Phi_{1:k} \in \mathbb{R}^{k \times L}$, such that the $i$-th row is $\phi_i$.

A first observation is that we can write any LDS impulse filter approximately in the spectral basis. This is a direct consequence of the spectral filtering methodology for learning an LDS \cite{hazan2017learninglineardynamicalsystems}. 

\begin{algorithm}
\caption{FindSpectralRepresentation}
\label{alg:spectral-rep}
\begin{algorithmic}[1]
\STATE \textbf{Input:} Scalar LDS parameter $\alpha \in \reals$, representation size $k$. 
\STATE \textbf{Output:} Spectral parameters $m \in \reals^k$.

\STATE Construct the impulse response vector \(\mu_L(\alpha)\) $\in \mathbb{R}^{L}$ as \( \mu_L(\alpha) = (1-\alpha)[1, \alpha_i, \alpha_i^2, \dots, \alpha_i^{L-1}] \).

\RETURN best spectral fit of the system over random inputs $u\in\mathbb{R}^L$ using gradient descent
$$ m = \arg\min_{m \in \reals^{k}}\Expect\limits_{\,u \in \mathbb{R}^L} \left[\left| m^\top \Phi_{1:k} u - \mu_L(\alpha)^\top u  \right|^2\right] .$$

\end{algorithmic}
\end{algorithm}

As a consequence of results from spectral filtering for learning LDSs, we show in Appendix \ref{app:alg_analysis} that the procedure FindSpectralRepresentation returns a vector $m$ for which, for some constant $c>0$,  
\[
\left\|  m^\top \Phi_{1:k} \;-\;  \,\mu_L(\alpha) \right\| \;\;\leq\; c  \, \, e^{-\frac{k}{\log L}}  .
\]

We proceed to use this subroutine to find a distillation to the spectral filters. 
\begin{algorithm}
\caption{Spectral Filters to LDS Filters}
\label{alg:spectralds}
\begin{algorithmic}[1]
\STATE \textbf{Input:} The first $k$ spectral filters matrix $\Phi_{1:k} \in \reals^{k \times L}$; and parameter $h > k$. %
\STATE \textbf{Output:} The transformation matrix \(\widetilde{M}\). 
\STATE Sample \( h \) randomly chosen independent scalars $\alpha_1,...,\alpha_h$ with each $\alpha \in [0,1]$.
\STATE Construct the matrix \(\mu_L(\alpha_{1:h})\) $\in \mathbb{R}^{h \times L}$ with $i$th row \( \mu_L(\alpha_i) = (1-\alpha_i)[1, \alpha_i, \alpha_i^2, \dots, \alpha_i^{L-1}] \).
\STATE For each scalar impulse response, find the spectral representation by 
$$ m_i = \text{FindSpectralRepresentation}(\alpha_i) . $$
\STATE Let $M$ be the $h \times k$ matrix whose rows are $m_i$.  
\RETURN \(\widetilde{M} := M^{-1}\,\mathrm \in \mathbb{R}^{k \times h}\) 
\end{algorithmic}
\end{algorithm}

\noindent Our main performance guarantee is given in the following theorem. 
\begin{theorem} \label{thm:Theorem1} As long as $h \geq k$, 
Algorithm \ref{alg:spectralds} returns w.h.p. a matrix $ \widetilde{M} $ such that 
\[
\left\|  \Phi_{1:k} \;-\;  \widetilde{M}  \,\mu_L(\alpha_1,\dots,\alpha_h) \right\| \;\;\leq\; c \, \lambda_{\max} h \, \, e^{-\frac{k}{\log L}}  ,
\]
where $\lambda_{\max}$ is the largest eigenvalue of the  Penrose-Moore pseudo inverse of the matrix $M$. 
\end{theorem}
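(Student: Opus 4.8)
The plan is to lift the per-filter guarantee of \textsc{FindSpectralRepresentation} to a single matrix identity and then invert that identity through $M\pinv$. First I would stack the $h$ single-scalar guarantees. By the stated property of \textsc{FindSpectralRepresentation}, each returned $m_i$ obeys $\|m_i\tr\Phi_{1:k}-\mu_L(\alpha_i)\|\le c\,e^{-k/\log L}$. Since the $i$-th row of $M$ is $m_i\tr$ and the $i$-th row of $\mu_L(\alpha_1,\dots,\alpha_h)$ is $\mu_L(\alpha_i)$, these $h$ statements assemble into $M\Phi_{1:k}=\mu_L(\alpha_1,\dots,\alpha_h)+E$, where every row of $E\in\mathbb{R}^{h\times L}$ has norm at most $c\,e^{-k/\log L}$. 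A row-norm bound then yields $\|E\|_2\le\|E\|_F\le\sqrt{h}\,c\,e^{-k/\log L}\le h\,c\,e^{-k/\log L}$ (the $\sqrt h\to h$ slack is harmless and could be removed to sharpen the bound).

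Second, I would show that $M$ has full column rank $k$ with high probability, so that $M\pinv M=I_k$. Because the spectral filters are orthonormal, the least-squares solution computed by the subroutine is, up to its optimization error, exactly the projection vector $m_i=(\langle\mu_L(\alpha_i),\phi_j\rangle)_{j\le k}$; thus the rows of $M$ are, to leading order, samples of the coordinate maps $g_j(\alpha)=\langle\mu_L(\alpha),\phi_j\rangle$. These $k$ maps are polynomials in $\alpha$ that are linearly independent as functions: a nonzero combination equals $\langle\mu_L(\cdot),v\rangle$ with $v=\sum_j c_j\phi_j\neq0$, and since the Vandermonde vectors $\{\mu_L(\alpha):\alpha\in[0,1]\}$ span $\mathbb{R}^L$, this cannot vanish identically. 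Linear independence of the $g_j$ forces the exact coefficient matrix to be rank-deficient only on a measure-zero algebraic set, so $h\ge k$ random draws give full column rank almost surely; as full rank is an open condition, it survives the small gradient-descent perturbation with high probability.

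Third, with $M\pinv M=I_k$, left-multiplying the identity by $\widetilde M=M\pinv$ gives $\Phi_{1:k}=\widetilde M\,\mu_L(\alpha_1,\dots,\alpha_h)+M\pinv E$, hence $\Phi_{1:k}-\widetilde M\,\mu_L(\alpha_1,\dots,\alpha_h)=M\pinv E$. Taking operator norms and using submultiplicativity together with $\|M\pinv\|_2=\lambda_{\max}$ gives $\|\Phi_{1:k}-\widetilde M\,\mu_L(\alpha_1,\dots,\alpha_h)\|\le\lambda_{\max}\|E\|_2\le c\,\lambda_{\max}\,h\,e^{-k/\log L}$, which is exactly the claimed bound.

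The main obstacle is the middle step: converting $M\pinv M=I_k$ into a rigorous high-probability statement for the \emph{approximate} matrix $M$. The exact-arithmetic linear independence of the coordinate polynomials is clean, but quantifying that $\sigma_{\min}(M)$ stays bounded away from zero under the per-row optimization error is the delicate part — and this conditioning is precisely what the factor $\lambda_{\max}=1/\sigma_{\min}(M)$ in the bound absorbs, which is why the theorem carries it as a parameter rather than a universal constant. I would also note that $\lambda_{\max}$ should be read as the top singular value (operator norm) of the rectangular pseudoinverse $M\pinv$, under which the final inequality is immediate.
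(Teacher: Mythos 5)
Your proposal is correct and follows essentially the same route as the paper's proof: stack the per-$\alpha_i$ guarantees of FindSpectralRepresentation into an error matrix $\mathcal{E} = M\Phi_{1:k} - \mu_L(\alpha_1,\dots,\alpha_h)$, bound it by summing the row errors (you use $\|\mathcal{E}\|_F \le \sqrt{h}\,c\,e^{-k/\log L}$ where the paper uses $\|\mathcal{E}\|_1 \le h\,c\,e^{-k/\log L}$ paired with H\"older's inequality), and left-multiply by the pseudoinverse so that the conditioning is absorbed into $\lambda_{\max}$. Your full-rank argument via linear independence of the coordinate maps $g_j(\alpha)=\langle\mu_L(\alpha),\phi_j\rangle$ is somewhat more careful than the paper's one-line appeal to the orthogonality of $\Phi_{1:k}$ and the Vandermonde structure of $\mu_L(\alpha_{1:h})$, and you rightly flag that neither argument quantifies $\sigma_{\min}(M)$ under the per-row optimization error --- which is exactly why the theorem carries $\lambda_{\max}$ as a parameter.
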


The significance of Theorem  \ref{thm:Theorem1} is that it allows us to translate between the representation of spectral filters and linear dynamical systems. 

We note that it is not immediate to upper bound $\lambda_{\max}$. Indeed, for $h \sim k$, this can be exponentially large in $k$, as it corresponds to the condition number of a Vandermonde matrix \cite{beckermann2000condition}.  However, we note experimentally, under the distribution described in the practical algorithm below, that as $h$ grows, $\lambda_{\max}$ quickly becomes smaller. This is an overparametrization effect, which we show experimentally in Appendix \ref{app:overparameterization}. We provide analysis of Theorem \ref{thm:Theorem1} in Appendix \ref{app:alg_analysis}.

\subsection{Practical Algorithm}

For improved practical performance we adopt the following procedure. Fix \(H \gg h\). As above, let \(\mu_L(\alpha_{1:H})\in\mathbb{R}^{H\times L}\) denote \(H\) impulse responses and let \(\Phi_{1:k}\in\mathbb{R}^{k\times L}\) denote the first \(k\) spectral filters. 
To obtain a representation of size \(h\), we select a small index set \(S\subseteq\{1,\dots,H\}\), \(|S|=h \ll H\), by running multi-target Orthogonal Matching Pursuit (OMP) \cite{omp, batch_omp} on the regression
\[
\mu_L(\alpha_{1:H})^\top X \approx \big(\Phi_{1:k}\big)^\top .
\]
OMP greedily sets one coefficient per step from $0$ to reduce the regression loss, with the selected coefficients after $h$ steps providing the indices for $S$. Given \(S\), we solve the unregularized least-squares problem
\[
X^{\star} \;=\; \arg\min_{X}\big\|\mu_L(\alpha_{S})^\top X-\big(\Phi_{1:k}\big)^\top\big\|_F^2,
\]
to yield \(\Phi_{1:k}\approx \widetilde{M}\,\mu_L(\alpha_{1:H})\), where $\widetilde{M} = (X^*)^\top$.

Another change is the distribution from which the \(\alpha_i\) are drawn. We employ symmetric, near-unit-radius sampling to emphasize long-memory behavior: for each \(i\), draw \(U_i\sim \mathrm{Unif}[0,1]\) and \(S_i\sim\text{Unif}(\{-1, 1\})\), and set
\[
\alpha_i \;=\; S_i\bigl(1-U_i^{4}\bigr).
\]
Equivalently, \(|\alpha_i| \stackrel{d}{=} 1-U_i^{4}\) with an independent random sign. This deliberately skews mass toward \(|\alpha_i|\approx 1\), improving reconstruction at a better \(h\) to $H$ ratio. We ablate this choice in Appendix \ref{app:unif_alph_dist}. For numerical stability we use \texttt{float64} arithmetic end-to-end and scale $\Phi_{1:k}$ by $s_{\mathrm{num}} > 0$ so that $\|\mu_L(\alpha_{1:H})\|_{F}\approx\|s_{\mathrm{num}}\Phi_{1:k}\|_{F}$. To mitigate the impact of magnitude on OMP, we normalize the columns of \(\mu_L(\alpha_{1:H})^\top\). We later undo the scaling. In Figure~\ref{fig:recon_error} we report the reconstruction error of \(\Phi_{1:k}\) against an LDS of hidden dimension \(h\) produced by this practical method.

\begin{figure}[h]
    \centering
    \includegraphics[width=0.8\linewidth]{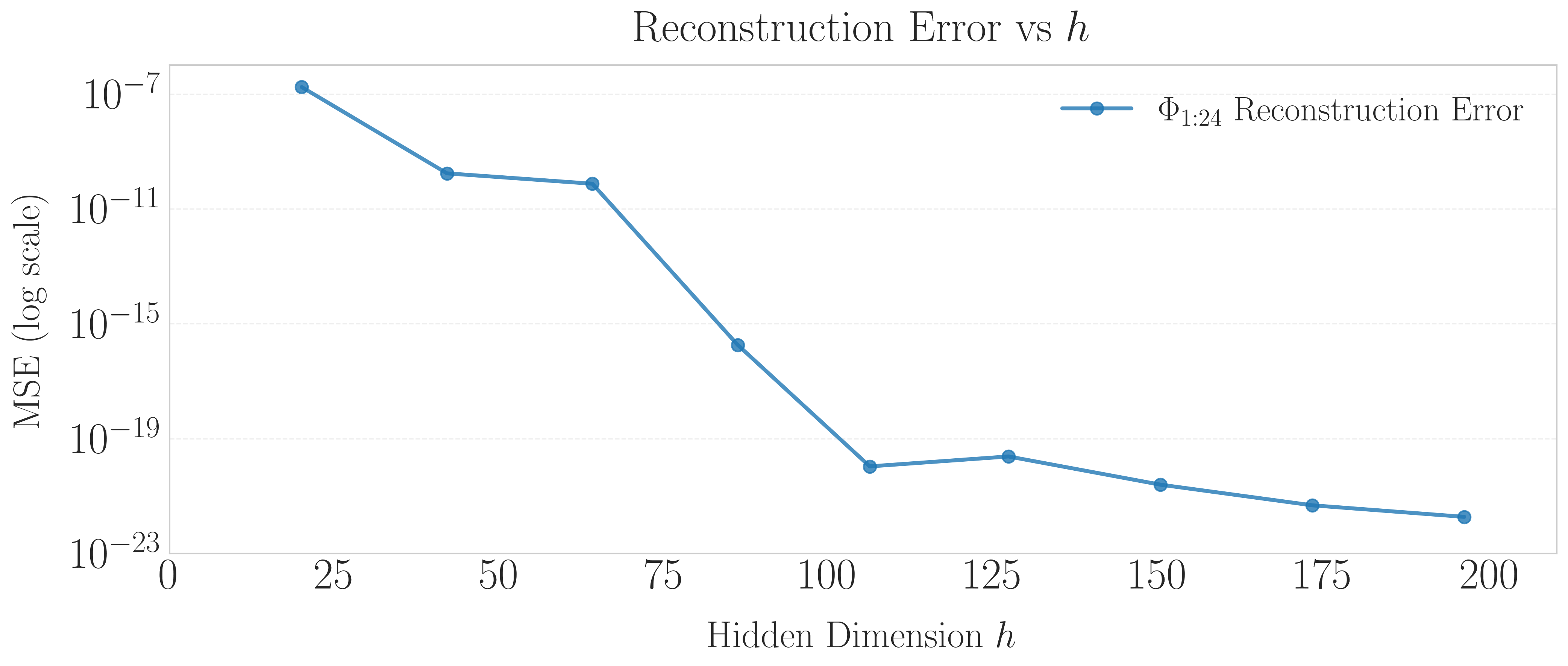}
    \caption{Reconstruction error of spectral filters as a function of LDS state dimension. For this experiment, $H = 10000$, but results are similar for $H = 1000$ to $20000$ (see Appendix \ref{appendix:error_vs_h}).}
    \label{fig:recon_error}
\end{figure}

\subsection{Converting $\widetilde{M}$ into an LDS.}
\label{convert_to_lds}
As a result of Theorem \ref{thm:Theorem1}, with $A := \text{Diag}(\alpha_1, \dots, \alpha_h)$, $\Gamma :=\text{Diag}(1-\alpha_1, \dots, 1-\alpha_h)$ and $\mu := \mu_L(\alpha_1, \dots, \alpha_h)$, we can replace the costly STU convolutions: 

\begin{align*}
U_{t,j}^+ &= \sum_{i=1}^{t} u_{t-i+1} \cdot \phi_j(i) \approx \sum_{i = 1}^{t} \left(\widetilde{M}_j\mu\right)(i)\cdot  u_{t-i+1} = \widetilde{M}_j \sum_{i=0}^{t-1} \Gamma A^i \, \mathbf{1}_h\, u_{t-i}^\top = \text{LDS}(\widetilde{M}_j\Gamma, A, \mathbf{1}_h)(u^\top_{1:L})\\
U_{t,j}^- &= \sum_{i=1}^{t} u_{t-i+1} \cdot \phi_j(i) \approx  \widetilde{M}_j \sum_{i=0}^{t-1} \Gamma(-A)^i\, \mathbf{1}_h\, u_{t-i}^\top =  \text{LDS}(\widetilde{M}_j\Gamma, -A, \mathbf{1}_h)(u^\top_{1:L})
\end{align*}

This provides the basis of our autoregressive inference advantage: rather than computing $U_{t,j}^+$ and $U_{t,j}^-$ as pure convolutions, we can maintain the hidden state for $\text{LDS}(\widetilde{M}_j\Gamma, A, \vec{1})$ and $\text{LDS}(\widetilde{M}_j\Gamma, -A, \vec{1})$ for the $O(h)$ computation during inference. 
For practical efficiency, we can compute all $U_{t,1}^{+}, \dots, U_{t,k}^{+}, U_{t,1}^{-}\dots U_{t,k}^{-}$ simultaneously with a single LDS by leveraging the similarities in the state updates (see Appendix \ref{app:eigenvalue_dist}).

\subsection{LDS to LDS Distillation} \label{sec:lds2lds}

A direct consequence of our approach is that we can distill any high-dimensional symmetric LDS into a low-dimensional LDS with a bounded error. For an LDS with input dimension $d_{in}$ and output dimension $d_{out}$,  spectral filtering provides an $\varepsilon$-approximation with only $O(d_{in} \cdot d_{out} \cdot \log L \cdot \log(\frac{1}{\varepsilon}))$ parameters regardless of the hidden dimension. 

Accepting that $h \lambda_{max}$ is $O(1)$ for $h \gg k$, as justified in Appendix \ref{app:overparameterization}, we can then convert this spectral representation by application of our distillation procedure into an LDS with state dimension $d_{in} \cdot h$, where with $h = O(k) = O(\log L \log \left(\frac{1}{\varepsilon}\right))$ we maintain $\varepsilon$ error. Thus, the distilled LDS has $O(d_{in} \cdot d_{out} \cdot \log L \cdot \log(\frac{1}{\varepsilon}))$ total parameters. In other words, the combination of spectral filtering for LDS learning and the following distillation step yields a practical method to reduce the state dimension while preserving the system’s dynamics within tight error bounds. Empirically, for tasks as difficult as modeling language, we only require $k \leq 24$ filters for strong performance, and the state dimension $h \geq 80$ of the distilled LDS suffices to retain performance.

\begin{figure}[h]
  \centering
  \begin{subfigure}[b]{0.48\linewidth}
    \centering
    \includegraphics[width=\linewidth]{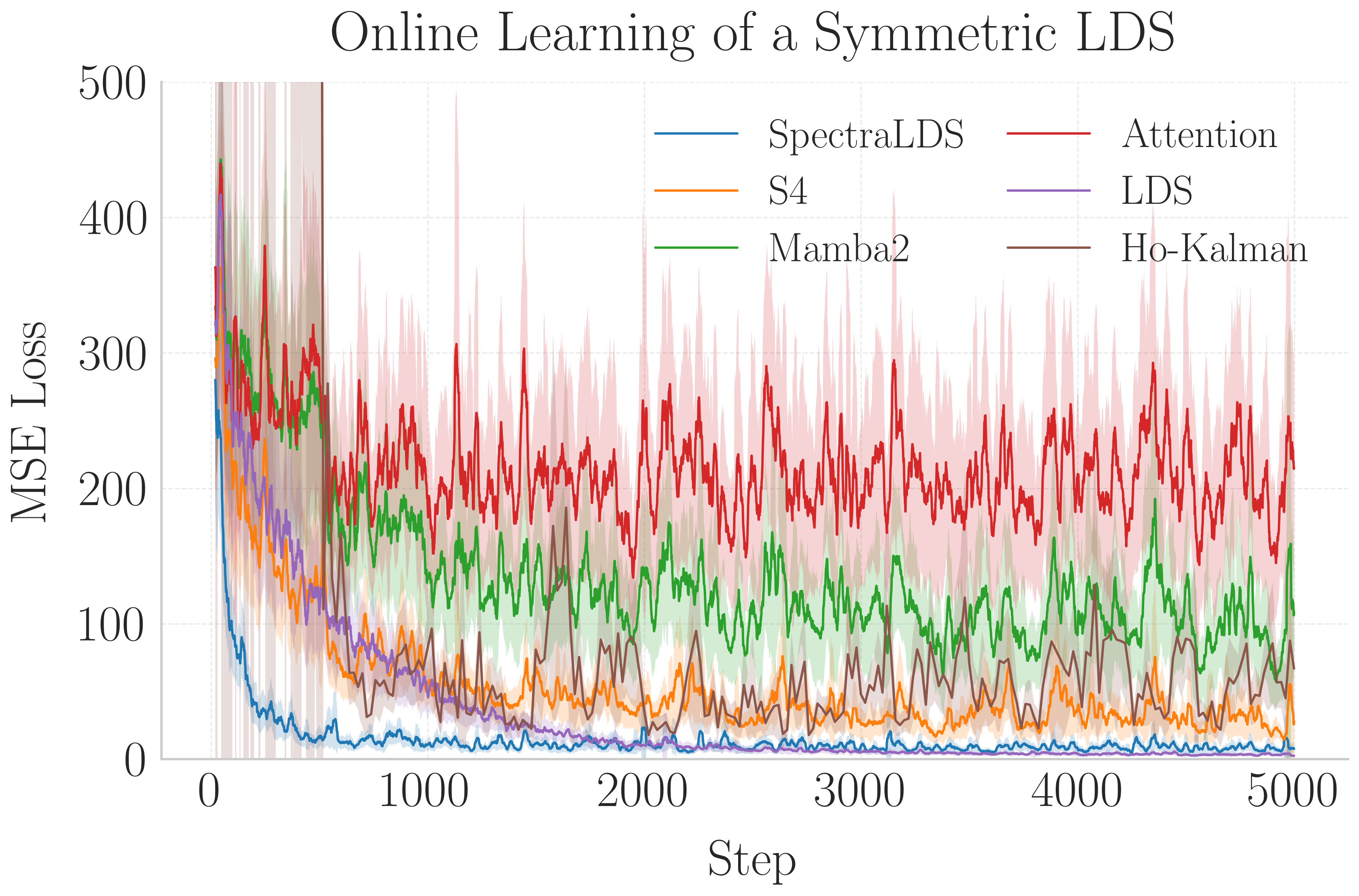}
    \label{fig:without_noise}
  \end{subfigure}
  \hfill
  \begin{subfigure}[b]{0.48\linewidth}
    \centering
    \includegraphics[width=\linewidth]{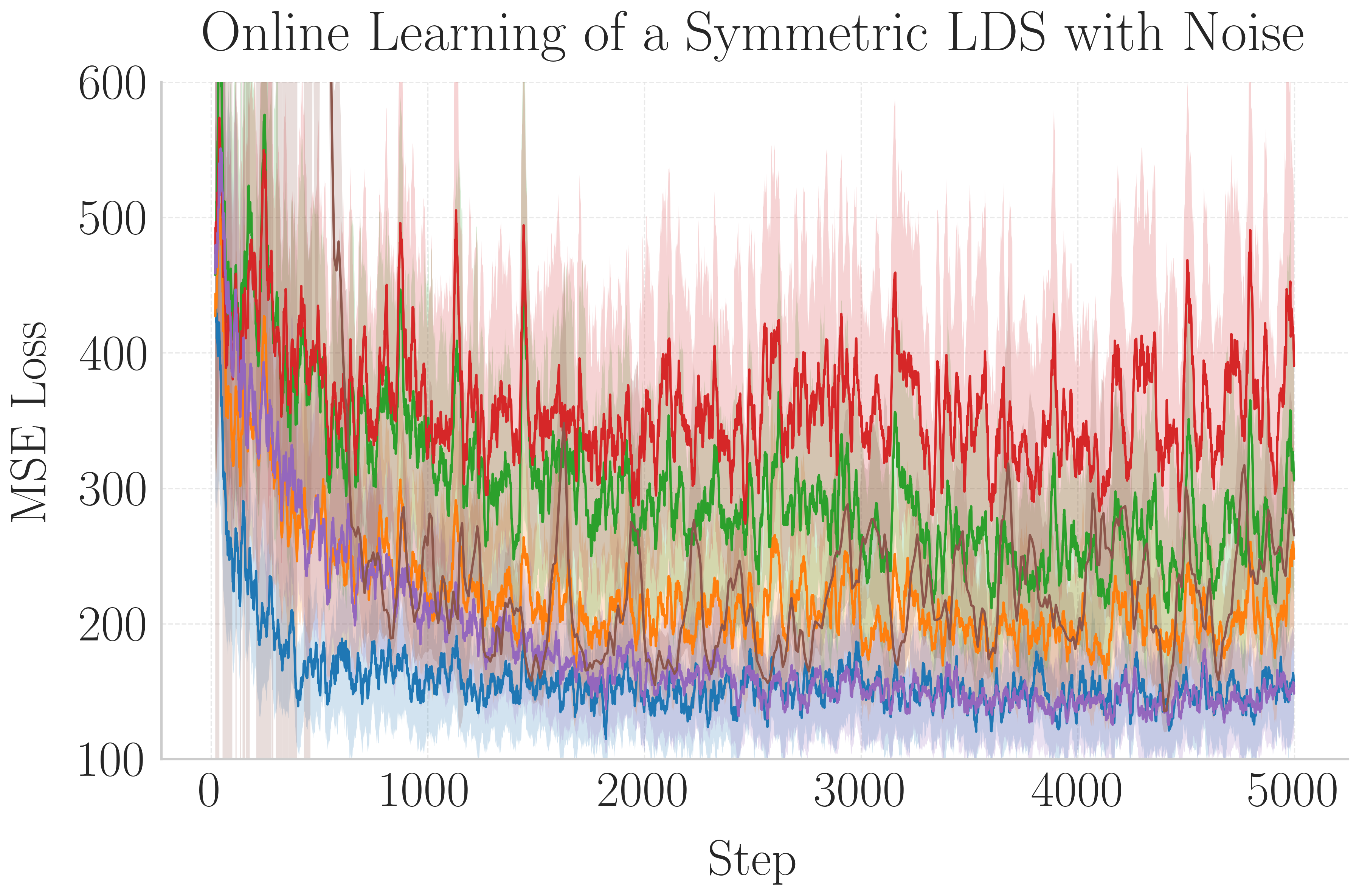}
    \label{fig:with_noise}
  \end{subfigure}
  \caption{Comparison of SpectraLDS and other methods learning an arbitrary symmetric LDS with and without noise. The shaded region shows the $95\%$ confidence interval over $8$ runs. Each model leveraged default configurations except the LDS, which required a lower learning rate to converge. More details are available in Appendix \ref{app:learning_lds}.}
  \label{fig:side_by_side}
\end{figure}

\section{Experiments}
\label{sec:experiments}

To demonstrate the effectiveness of our distillation algorithm, we begin by illustrating that a low-dimensional LDS can effectively approximate the spectral filters in Figure~\ref{fig:filters_fit} and further examine the eigenvalues of the resulting system in Appendix \ref{app:eigenvals_lds}. Building on this, to further quantify the effectiveness of Algorithm~\ref{alg:spectralds} in fitting the spectral filters with practically efficient LDSs, we plot the reconstruction error of the spectral filters across different LDS state dimensions in Figure \ref{fig:recon_error}. These results show that our algorithm, guided by Theorem \ref{thm:Theorem1}
amply reduces the approximation error to a sufficiently small level without an excessive increase in state dimension.

\begin{figure}[H]
    \centering
    \includegraphics[width=1\linewidth]{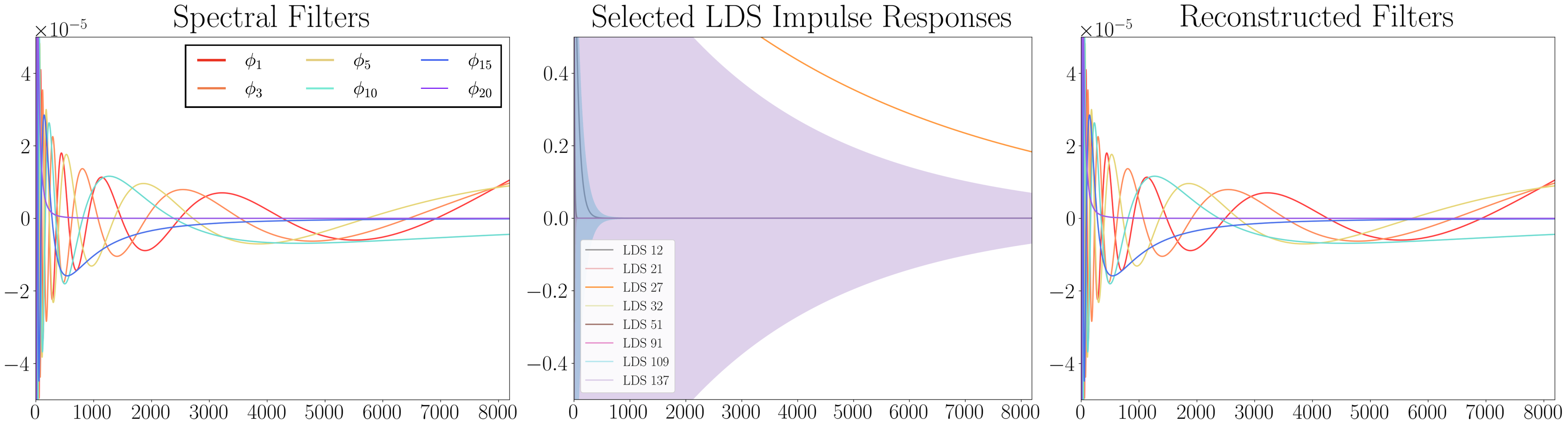}
    \caption{Fit of Spectral Filters by an LDS of state dimension 160, where x-axis represents the time domain. 80 dimensions are for the spectral filters ($U^+_{t,k}$) and 80 for the negative features ($U^-_{t,k}$, not shown). Filters are normalized to be comparable. The shading on the middle figure represents a filter quickly alternating (negative $\alpha$). A complete comparison for $k = 24$ without normalization is provided in Appendix \ref{app:fit_with_lds_80}. These filters have MSE error $7.689 \times 10^{-19}$.} 
    \label{fig:filters_fit}
\end{figure}

To validate our method quantitatively, we perform synthetic experiments comparing SpectraLDS against other benchmarks on learning high-memory LDSs (see Appendix \ref{app:synthetic}). We find that our method significantly outperforms strong baselines in both sample efficiency and reconstruction accuracy, confirming that our approach is well suited for learning systems with long-range dependencies.

Turning to the large-scale evaluation, we distill a 340M-parameter FlashSTU model \cite{liu2024flashstu} into an LDS-based architecture and compare its performance across a suite of language benchmarks. From the results in Table~\ref{tab:comparison}, we point out that despite the change from convolution-based spectral filters to an explicit LDS representation for the STU layers, the performance remains identical across all tasks. This observation supports our claim that the STU can be closely approximated by a low-dimensional LDS without compromising predictive accuracy. We provide details of the experimental setup and hyperparameters for the models used in Appendix~\ref{app:modeltable}. 

\begin{footnotesize}
\begin{table}[H]
\centering
\resizebox{\linewidth}{!}{%
\begin{tabular}{lcccccccccc}
\toprule
\textbf{Model} & \textbf{MMLU} & \textbf{Hella.} & \textbf{PIQA} & \textbf{BoolQ} & \textbf{Wino.} & \textbf{CSQA} & \textbf{OBQA} & \textbf{ARC-e} & \textbf{ARC-c} & \textbf{Average} \\
\midrule
Flash STU 340M       & 26.58 & 30.46 & 65.34 & 60.12 & 51.85 & 20.48 & 20.60 & 54.08 & 23.29 & 39.20 \\
SpectraLDS 340M & 26.58 & 30.45 & 65.29 & 60.12 & 50.99 & 20.15 & 20.20 & 54.17 & 23.29 & 39.03 \\
Flash STU Std. Err. & 0.37 & 0.47 & 1.11 & 0.86 & 1.40 & 1.16 & 2.06 & 1.02 & 1.24 & -- \\
SpectraLDS Std. Err. & 0.37 & 0.46 & 1.11 & 0.86 & 1.40 & 1.15 & 2.07 & 1.02 & 1.24 & -- \\
\midrule[0.005pt]
Transformer 340M & 26.81 & 30.41 & 64.64 & 61.10 & 51.62 & 19.98 & 18.80 & 55.47 & 21.84 & 38.96 \\
\bottomrule
\end{tabular}%
}
\vspace{4pt}
\caption{Evaluation of a 340M-parameter FlashSTU model and its distilled representation, replacing each convolution with an LDS of state dim. 160, on language benchmarks. Despite converting convolutions into an explicit LDS formulation, performance remains statistically equivalent.}
\label{tab:comparison}
\end{table}
\end{footnotesize}

Finally, we present various measures of inference speed to illustrate the constant-time per token generation provided by the distilled STU. In Figure \ref{fig:speed34}, we compare the inference speed of a distilled STU model against a naive convolutional approach and a numerically stable FutureFill variant \cite{agarwal2024futurefillfastgenerationconvolutional}. In a 12 STU layer model, the naive convolution exhibits quadratic runtime growth with sequence length; the FutureFill variants achieve logarithmic growth; and the distilled STU-to-LDS model demonstrates the best performance with linear growth. 

In the hybrid model with 6 attention and 6 STU layers, we find the distilled LDS implementation still provides a significant performance increase. Additionally, we note increases in the LDS state dimension have little impact on the overall runtime, indicating that LDS operations are not a compute bottleneck (see Appendix \ref{app:layerspeed}). For both the hybrid and STU-only models up to $16,384$ tokens, the distilled LDS, naive convolution, and Epoched Future Fill all have similar runtimes. All performance benchmarks were conducted on a single H100 GPU, with each generation process evaluated separately to ensure consistent measurements. We provide full experiment details in Appendix \ref{app:flashstu}.

\begin{figure}
    \centering
    \includegraphics[width=1    \linewidth]{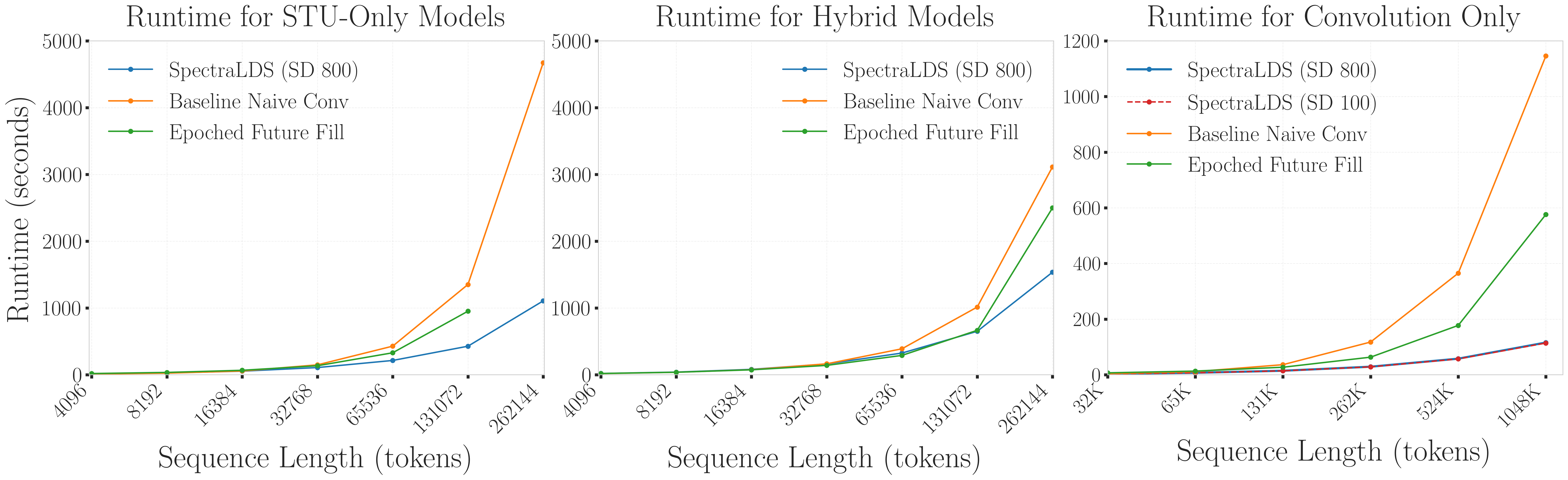}
    \caption{Runtime for generating sequences of increasing length across STU implementations. The naive convolution approach exhibits quadratic growth, the FutureFill variants show logarithmic growth, and the distilled STU-to-LDS layers achieve linear growth. The STU-Only Epoched Future Fill OOMs for the largest sequence length.
    As shown in the rightmost figure, the SpectraLDS models have nearly identical runtime despite varied state dimension. 
    Further results are in Appendix \ref{app:layerspeed}.}
    \label{fig:speed34}
\end{figure}

Looking ahead, further investigation is warranted to better understand how convolution, LDS, and attention layers interact at the hardware level, and to optimize their coordination for even greater speedup. Additionally, further work is required to determine if the LDS layer is stable below float64.

To conclude, since optimized transformer implementations suffer from KV-Cache memory bottlenecks rather than compute bottlenecks \cite{gu2024mamba}, and the LDS layers have drastically lower memory requirements, we anticipate that, \textbf{with appropriate optimization, the inference speed of hybrid attention-STU architectures will be independent of the amount of LDS layers.}

\section{Conclusion and Discussion}

vWe have provided the first provable technique for learning the parameters of a symmetric LDS with arbitrarily high effective memory. By leveraging their convex-learning approach, we show how spectral filters can be distilled into an explicit LDS representation, enabling the construction of a state-space model with logarithmic-time inference and theoretical guarantees on the loss bounds. 

We have provided a lower bound on how increasing state dimension affects reconstruction loss for a linear dynamical system, and we envision SpectraLDS as a drop-in replacement for autoregressive convolutional layers in certain sequence-to-sequence tasks.

\section{Acknowledgments}

EH gratefully acknowledges support from the Office of Naval Research and Open Philanthropy.

\bibliography{bibliography.bib}
\bibliographystyle{plain}

\newpage
\appendix

\section{Appendix}

\subsection{Notation throughout the paper}

\begin{tabular}{ll}
 $x_t \in \reals^{d}$  & state at time $t$  \\
 $u_t \in \reals^{n}$  & control (input) at time $t$  \\
 $y_t \in \reals^{m}$ & sequence model outputs at time $t$ \\
 $A,B,C,D$ & system matrices for linear dynamical system \\
 $h$ & Number of $\alpha$ used in final LDS representation  \\
 $H$ & Number of random samples of $\alpha$ \\
 $k$ & Number of spectral filters \\
 $K$ & Length of generated sequence \\
 $T$ & Length of prefill sequence  \\
 $L$ & Length of sequence (or of filters)
 \end{tabular}

\subsection{Experimental details and notation}
\label{app:details}
All experiments were performed on Nvidia H100-80GB GPUs in PyTorch \cite{paszke2017automatic}. All layers except the LDS leverage bfloat16 precision, whereas the LDS layers require float64 precision. All tests, unless otherwise stated, use $k = 24$ spectral filters and replace the $k$ filters with an LDS with state-dimension $ h = 80$. To fit the additional $24$ alternating filters of the negative component (i.e., $\phi_j^{-}$, where $\phi_j^{-}[i] = (-1)^{i-1}\phi_j[i]$, as is needed to compute $U_{t,j}^{-}$ purely convolutionally), we expand to an LDS with state-dimension $h = 160$.\\

For the layer-level inference speed benchmarks, we leverage the STU with tensor-dot approximation (STU-T) \cite{agarwal2024spectralstatespacemodels}, the same variant used in the FlashSTU language model \cite{liu2024flashstu}. It is worth noting that the STU-to-LDS distillation leads to LDS layers with comparable speed regardless of whether the tensor-dot approximation is employed, and thus this approximation only accelerates the STU layers in benchmarks. A formal definition of this approximation is as follows:

\paragraph{STU Tensor-Dot Approximation:}
\cite{agarwal2024spectralstatespacemodels} introduced an important optimization, the tensor-dot approximation,
wherein each tensor \(M \in \mathbb{R}^{d_{\mathrm{in}} \times 2k \times d_{\mathrm{out}}}\), representing a concatenation of the tensors $M_1^{\phi+}, \dots, M_k^{\phi+}, M_1^{\phi-}, \dots,M_k^{\phi-}$, is learned as
\(M^{(1)} \times
M^{(2)}\) for $M^{(1)} \in \mathbb{R}^{d_{\mathrm{in}} \times 2k}$ and $M^{(2)}\in \mathbb{R}^{d_{\mathrm{in}} \times d_{\mathrm{out}}}$. This approximation allows for a reduction in convolutions as, with input $x_1, x_2, \dots, x_\ell$, we can compute $y^{\textrm{SF}}_{\ell} \approx \sum_{i = 1}^{\ell} (x_{\ell - i+1}^\top M^{(2)}) \odot M_{\textrm{filters}}[i]$, where $M_{\textrm{filters}} = [\phi_1, \dots, \phi_k,\phi^{-}_1, \dots, \phi_k^{-}]^\top M^{(1)} \in \mathbb{R}^{L \times d_{\textrm{out}}}$ and $\odot$ refers to the Schur product (i.e. $(x \, \odot \, y)_j = x_j \cdot y_j$). This allows for only $d_{\textrm{out}}$ convolutions with the tensor-dot approximation, as opposed to $k \cdot d_{\textrm{in}}$ convolutions without it.
Although this method can reduce expressivity and does not inherit the same marginal-stability
guarantees, it maintains competitive empirical performance while yielding significant improvements in efficiency. 
\\

Additionally, we frequently refer to the impulse response of the LDS and STU models. The impulse response of a linear sequence model $f:\mathbb{R}^{L} \rightarrow \mathbb{R}^{1}$ is the vector or  convolutional kernel $\psi \in \mathbb{R}^L$ that is equivalent to $f$ (i.e. $f(x_{[1, \dots L]}) = \psi * x_{[1, \dots L]}$), and thus: 

$$
\psi[t] = f([\underbrace{0, \dots, 1, \dots, 0}_{\text{1 at position } L-t+1}])
$$

We only consider the impulse response of $f:\mathbb{R}^{L} \rightarrow \mathbb{R}^{1}$, but the impulse response is closely related to the derivative with respect to the inputs and is generalized identically. For an STU model with $d_{\textrm{in}} = d_{\textrm{out}} = 1$ and parameters $M_1^{\phi+}, \dots, M_k^{\phi+}, M_1^{\phi-}, \dots,M_k^{\phi-} \in \mathbb{R}$, the impulse response $\psi_{\textrm{SF}}$ is thus described by $\psi_{\textrm{SF}}[t] = \sum_{j = 1}^k M_{j}^{\phi+} \phi_j[t] + (-1)^{t-1}\sum_{j = 1}^k M_{j}^{\phi-} \phi_j[t]$. Similarly, for an LDS with $d_{\textrm{in}} = d_{\textrm{out}} = 1$ and parameters  $a,b,c \in \mathbb{R}$, the impulse response $\psi_{\textrm{LDS}}$ is described by $\psi_{\textrm{LDS}}[t] = ca^{t-1}b$

\subsection{Experimental Results on the Condition Number of $M$, the Spectral Coefficients Matrix}
\label{app:overparameterization}

In Figure \ref{fig:changing_h}, we present experimental results on the condition number of $M$ as defined in Section \ref{sec:alg_result}. Recall that $M\!\in\!\mathbb{R}^{h\times k}$ is the spectral coefficients matrix produced by Algorithm~\ref{alg:spectralds}; its $i$-th row $m_i^{\!\top}$ stores the coefficients that express the LDS impulse response $\mu_L(\alpha_i)$ (with geometric decay factor $\alpha_i$) in the spectral basis $\Phi_{1:k}$. Starting at $h = k$ with $k = 24$, we repeatedly add additional independent vectors $\alpha_i$ and measure the maximum singular value of the constructed $M^{-1}$. The blue central line shows the mean maximum singular value across 5 experiments for a given value of $h$, with the shaded region showing the maximum and minimum of the largest singular values across experiments. We draw $\alpha$ independently from the distribution in Figure \ref{fig:alpha_distribution}. To justify our statement in section $5.2$ that $\lambda_{max} \cdot h$ can be considered $O(1)$, we additionally plot $\lambda_{max} \cdot h$ under the same experimental setup in Figure \ref{fig:lambda_h}.

\begin{figure}[h]
    \centering
    \includegraphics[width=0.6\linewidth]{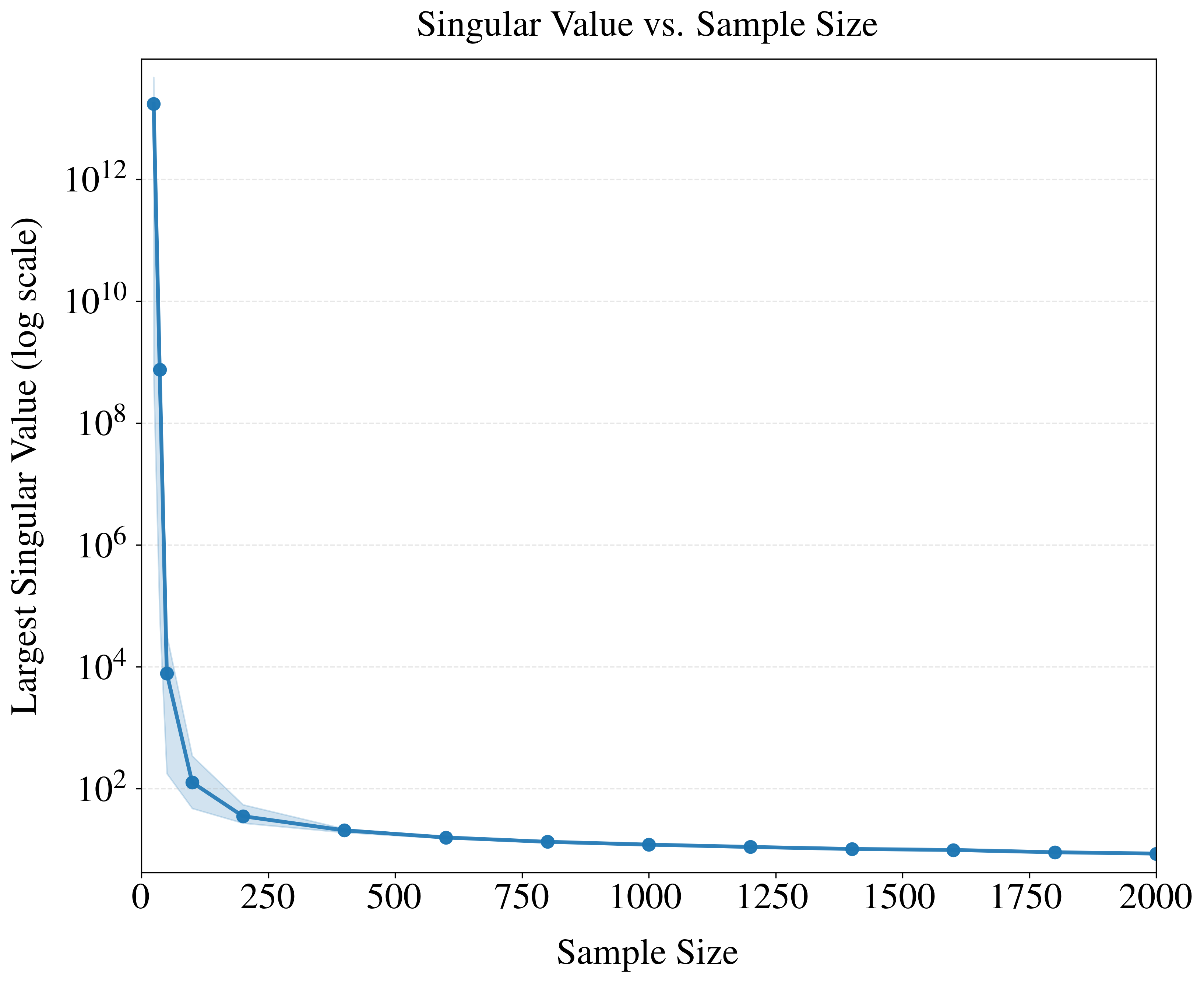}
    \caption{Largest Singular Value as we increase $h$.}
    \label{fig:changing_h}
\end{figure}

\begin{figure}[h]
    \centering
    \includegraphics[width=0.6\linewidth]{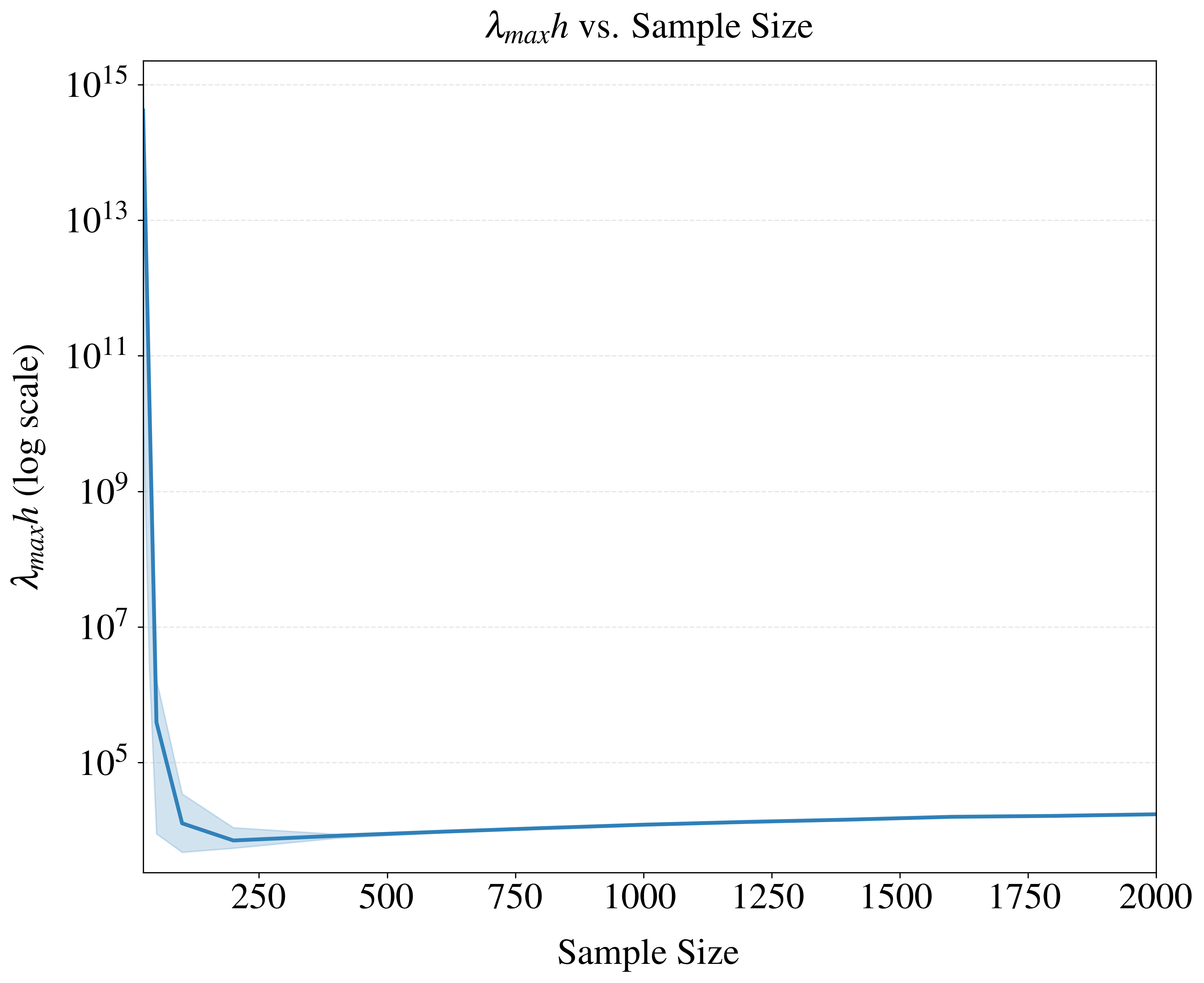}
    \caption{$\lambda_{max} \cdot h$ as we increase $h$.}
    \label{fig:lambda_h}
\end{figure}

\begin{figure}[h]
    \centering
    \includegraphics[width=0.6\linewidth]{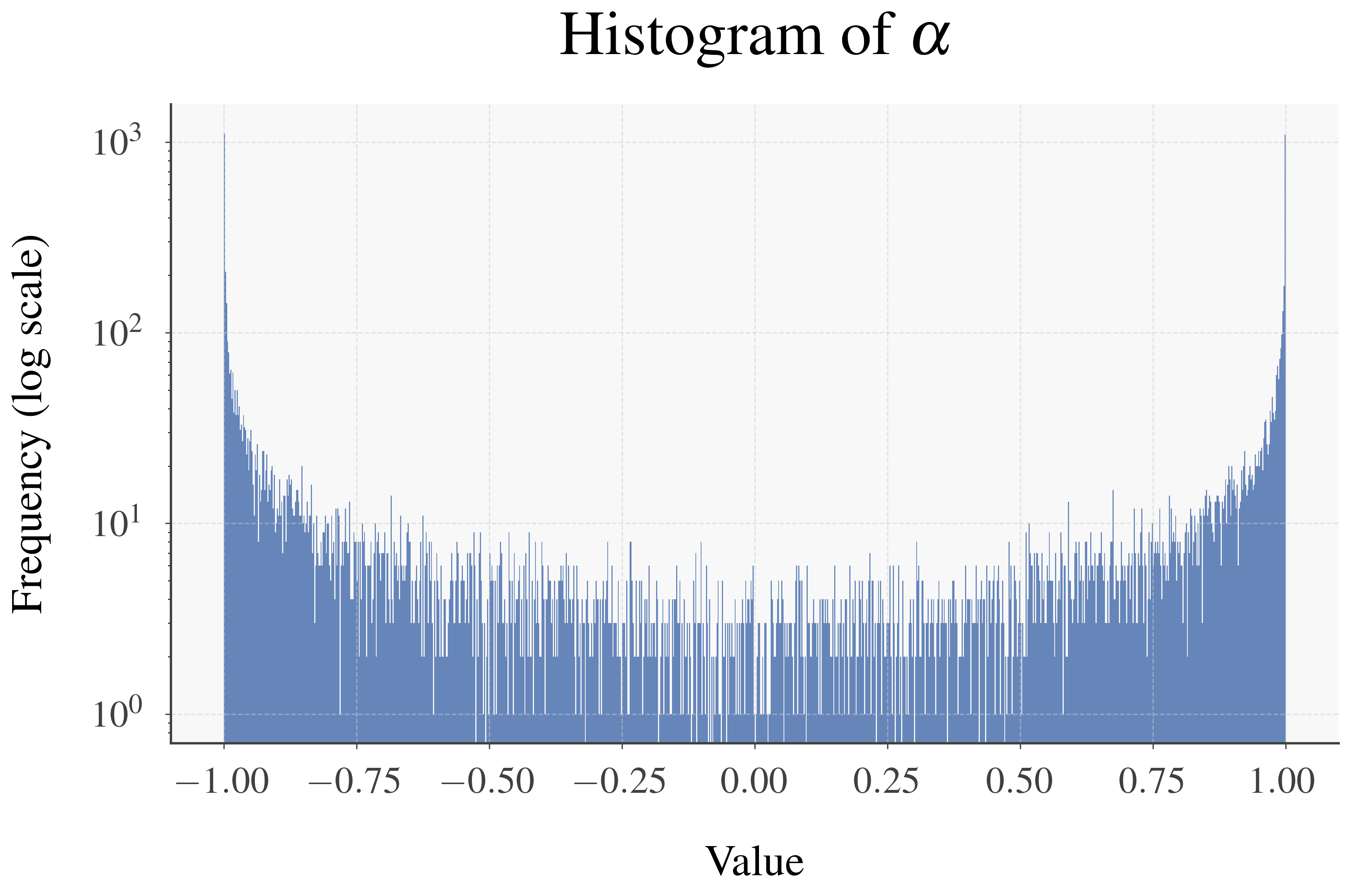}
    \caption{Distribution of $\alpha$ ($H = 10,000$ samples)}
    \label{fig:alpha_distribution}
\end{figure}

\subsection{Analysis of Theorem 1}
\label{app:alg_analysis}

\begin{proof}[Proof of Theorem \ref{thm:Theorem1}]

Let 
$$\mu_L(\alpha) = (1-\alpha)\begin{bmatrix}  1 & \alpha & \alpha^2 & \ldots & \alpha^{L-1}  \end{bmatrix} \in \reals^L,  $$ 
be the $L$-impulse response of the one dimensional linear dynamical system with parameters $\alpha, b=1,c=1-\alpha$.  
Lemma 11.3 from \cite{hazan2022introduction} asserts that for any $\alpha \in [0,1]$, there exist real coefficients $m_1,\dots,m_k \in \mathbb{R}^k$ and constant $c > 0$,  such that any sequence of inputs $u_{L:1} \in \mathbb{R}^L$ with $\|u_{L:1}\|_\infty \leq 1$, 
\[
\left| \sum_{i=1}^k m_i \,\langle \phi_i , u_{L:1} \rangle 
-   \langle \mu_L(\alpha) , u_{L:1} \rangle \right| \leq c e^{-\frac{k}{\log L}} .
\]
The RHS term represents the evolution of the LDS, with system matrices $b,c$ that are assumed to be identity, and $\mu_L(\alpha)$ is the system evolution.

As the above is true for all $u_{L:1}$, for each \(\alpha_j\), there exist $m_j$ such that,
\begin{equation}
\label{eqn:shalom2}
\forall j \quad,\quad
\left\| m_j^\top \,\Phi_{1:k} 
\;-\;  \mu_L(\alpha_j) \right\| 
\;\leq\; c \,  e^{-\frac{k}{\log L}} .
\end{equation}

Moreover, as the optimization problem is convex in $m_j$ and Lipschitz continuous, FindSpectralRepresentation will return such $m_j$.

Let $M \in \mathbb{R}^{h \times k}$ be the matrix whose rows are $m^j \in \mathbb{R}^k$:
\[
M 
\;=\;
\begin{pmatrix}
-\,m^1- \\
-\,m^2- \\
\vdots \\
-\,m^h-
\end{pmatrix}
.
\]
Let $\mathcal{E} = M\, \Phi_{1:k} \;-\;   \,\mu_L(\alpha_1,\dots,\alpha_h) $. By the triangle inequality and \eqref{eqn:shalom2},
\begin{equation} \label{eqn:shalom1111}
\| \mathcal{E} \|_1
\;\leq\;  \sum_{j=1}^h \left\| m_j^\top \,\Phi_{1:k} 
\;-\;  \mu_L(\alpha_j) \right\| \leq  c \, h\,  e^{-\frac{k}{\log L}}.
\end{equation}
Thus, assuming that $M$ is full rank, multiplying $\mathcal{E}$ by the Penrose-Moore pseudo-inverse of $M$ and using Holder's inequality, we get  
$$ \| \Phi_{1:k} \;-\;  M^{-1} \mu_L(\alpha_1,\dots,\alpha_h) \| =\| M^{-1} \mathcal{E} \| \leq \|M^{-1}\|_\star \|\mathcal{E}\|_1  \leq  \lambda_{\max} \cdot c h e^{-\frac{k}{\log L}} . $$

It remains to argue that $M$ is full rank. This follows since $\Phi_{1:k}$ is an orthogonal basis, and the matrix $\mu_L(\alpha_{1:h})$ is a Vandermonde matrix. Thus, both matrices are full rank. \\

\end{proof}

\ignore{
Taking $M$ as invertible (Appendix~\ref{app:invertibility}), we have:
\[
\bigl\|\, \Phi_{1:k} \;-\; M^{-1}\,\Gamma \,\mu_L(\alpha_1,\dots,\alpha_h) \bigr\|
\;\;\leq\;\;
c \, \omega \, h\,  e^{-\frac{k}{\log L}}
\]
We introduce $\omega$ as the condition number of $M$. 
Hence, we see that $\Phi_{1:k}$ (the matrix whose rows are the spectral filters $\phi_1,\dots,\phi_k$) is within $c \, \omega \, k \,  e^{-\frac{k}{\log L}}$ of the matrix $M^{-1}\,\Gamma \,\mu_L(\alpha_1,\dots,\alpha_k)$.  More succinctly, 
\[
\Phi_{1:k} \;\approx\; \widetilde{M}\,\mu_L(\alpha_1,\dots,\alpha_k),
\]
where $\widetilde{M}  \;=\; M^{-1} \, \mathrm{diag}(\Gamma) \;\in\; \mathbb{R}^{k \times k} $.
Thus,  each row of \(\Phi_{1:k}\) (i.e.\ each spectral filter \(\phi_i\)) is well-approximated by a linear combination of the impulse response vectors  \(\{\mu_L(\alpha_1),\dots,\mu_L(\alpha_k)\}\). Note that $\mu_L$ is simply $\mu_L$ rescaled and thus, each spectral filter \(\phi_i\)) is well-approximated by a linear combination of the impulse response vectors  \(\{\mu_L(\alpha_1),\dots,\mu_L(\alpha_k)\}\)

Equivalently, once \(\widetilde{M}\,\mu_L(\alpha_1,\dots,\alpha_k)\) is obtained, we can
recover the LDS impulse response vectors (that correspond to powers of \(\alpha^i\)) from the STU's
spectral filters. 
These columns provide an approximate representation of the STU filters in the LDS impulse-response basis, bridging the two representations.

\subsection{Invertibility of Spectral Coefficients Matrix \texorpdfstring{$M$}{M}}
\label{app:invertibility}

We are given the geometric (LDS) vectors
\[
  \mu_L(\alpha_j) \;=\;
  \bigl(1,\;\alpha_j,\;\alpha_j^2,\;\dots,\;\alpha_j^{L-1}\bigr),
  \quad j=1,\dots,k,
\]
which are assumed to be linearly independent (this holds, for instance, when
the \(\alpha_j\) are all distinct~\cite{gautschi1975optimally}). From Theorem 1, there exists a matrix \(M \in \mathbb{R}^{k\times k}\)
satisfying\
\begin{equation}
\label{eqn:shalom1111} \left\| 
  \begin{pmatrix}
    \mu_L(\alpha_1) \\[3pt]
    \vdots          \\[3pt]
    \mu_L(\alpha_k)
  \end{pmatrix} - 
  M \,\Phi_{1:k} \right\| \leq c e^{-k/\log L} 
\end{equation}
In this section we prove  that \(M\) is invertible (and well-conditioned)  provided the \(\alpha_j\) are distinct.

\subsubsection*{1. The Exact Case: Vandermonde Invertibility}

First, consider the idealized scenario where
\[
  \mu_L(\alpha_j)
  \;=\;\sum_{i=1}^k m_i^j\,\phi_i,
  \quad j=1,\dots,k.
\]
In that case, each \(\mu_L(\alpha_j)\) is precisely a row of the classical
Vandermonde matrix
\[
  V \;=\;
  \begin{pmatrix}
    1 & \alpha_1 & \alpha_1^2 & \cdots & \alpha_1^{L-1} \\[3pt]
    1 & \alpha_2 & \alpha_2^2 & \cdots & \alpha_2^{L-1} \\[3pt]
    \vdots & \vdots & \vdots & \ddots & \vdots \\[3pt]
    1 & \alpha_k & \alpha_k^2 & \cdots & \alpha_k^{L-1}
  \end{pmatrix}.
\]

Equation \ref{eqn:shalom1111} amounts to $\| V - M \Phi_{1:k} \| \leq \eps$. The matrix $\Phi_{1:k}$ has orthogonal rows, as they are eigenvectors of the spectral basis. Let $\Phi_{1:k}^{-1}$ be the Penrose-Moore pseudo inverse of $\Phi_{1:k}^{-1}$. Thus,  multiplying by $\Phi_{1:k}^{-1}$ we get
$$ \| V \Phi_{1:k}^{-1} - M \| \leq \eps \|\Phi_{1:k}^{-1}\|  $$

Hence the rows of \(M\) (i.e., the coefficient vectors \(\{m_i^j\}\)) correspond
to the coordinate change from the Vandermonde basis
\(\{\,1,x,x^2,\dots\}\) to the spectral-filter basis~\(\{\phi_i\}\).  Because
\(\phi_i\) are orthonormal filters \cite{hazan2017learninglineardynamicalsystems}, the
transformation from \(\mu_L(\alpha_j)\) to \(M\) is an isomorphism. Under the
distinctness assumption on \(\alpha_j\), it is well known
\cite{gautschi1975optimally,beckermannTownsend2017} that \(V\) is invertible,
so \(M\) is invertible as well.

\subsubsection*{2. The Condition Number of ~\(M\)}

Moreover, the condition number of~\(M\) is the product of the condition number of the orthonormal transform from \(\mu_L(\alpha_j)\) to~\(\phi_i\), which is~1, times the condition number of the (real) Vandermonde matrix~\(V\). To see why this factorization holds, note that in the exact scenario of Theorem~1,
we can arrange the LDS vectors and spectral filters so that
\[
  \bigl(\mu_L(\alpha_1),\dots,\mu_L(\alpha_k)\bigr)^\top
  \;=\;
  M\,\Phi_{1:k},
\]
where \(\Phi_{1:k}\) is formed by orthonormal rows \(\{\phi_i\}\).  If we let
\(V\) be the Vandermonde matrix whose \(j\)th row is
\(\mu_L(\alpha_j)=(1,\alpha_j,\ldots,\alpha_j^{L-1})\), then by rearranging
the above one obtains
\[
  M^{-1} \;=\; \Phi_{1:k}^{\,*} \; V^{-1},
\]
where \(\Phi_{1:k}^{\,*}\) is a unitary transformation (condition number~\(1\)).
Hence
\(\mathrm{cond}(M) = \mathrm{cond}(V)\), since multiplying or factoring out a
unitary matrix does not change conditioning. Thus, the ill-conditioning of
\(M\) reflects that of the Vandermonde matrix \(V\). Classical estimates
\cite{gautschi1975optimally,beckermannTownsend2017} show
\(\mathrm{cond}(V)\) can grow exponentially in~\(k\), roughly
\(\sim (1+\sqrt{2})^k\) up to polynomial factors in~\(k\). Nevertheless, \(M\)
remains invertible for distinct~\(\alpha_j\).

\subsubsection*{3. The Approximate Case: Stability Under Small Errors}

Next, suppose the expansion is only approximate:
\[
  \mu_L(\alpha_j)
  \;=\;\sum_{i=1}^k m_i^j\,\phi_i \;+\;\epsilon_j,
\]
with each \(\|\epsilon_j\|\) small.  In matrix form, define
\[
  \widehat{V} \;=\;
  \begin{pmatrix}
    \mu_L(\alpha_1)\\[2pt]
    \mu_L(\alpha_2)\\
    \vdots\\[2pt]
    \mu_L(\alpha_k)
  \end{pmatrix}
  \quad\text{and}\quad
  \mathcal{E} \;=\;
  \begin{pmatrix}
    \epsilon_1\\[2pt]
    \epsilon_2\\[2pt]
    \vdots\\[2pt]
    \epsilon_k
  \end{pmatrix}.
\]
Hence
\[
  \widehat{V} \;-\; M\,\Phi_{1:k} \;=\;\mathcal{E}
  \quad\Longrightarrow\quad
  M^{-1}\widehat{V} \;-\; \Phi_{1:k}
  \;=\; M^{-1}\,\mathcal{E}.
\]
From Theorem 1, we have a bound of the form
\[
  \|\mathcal{E}\|\;\le\; c\,k\, e^{-\,k/\!\log L}.
\]
Thus
\[
  \|M^{-1}\widehat{V} \;-\; \Phi_{1:k}\|
  \;\le\; \|M^{-1}\|\;\|\mathcal{E}\|
  \;\le\; \omega \,\bigl(\,c\,k\, e^{-\,k/\!\log L}\bigr),
\]
where \(\omega := \|M^{-1}\|\).  Therefore, as long as \(k\) grows in a manner
such that \(k\,e^{-\,k/\!\log L}\) is small, one obtains an accurate
\(\epsilon\)-approximation to \(\Phi_{1:k}\).  In essence, the exponentially
decaying error term in the approximate expansion can dominate any
potentially large Vandermonde condition number, up to moderate values of
\(k\). This holds true for \(k\) on the order  \(k=O(\log L \,\log\bigl(\tfrac{1}{\varepsilon}\bigr))\). Hence \(M\) remains stably invertible, completing the argument.
}

\subsection{Learning a Symmetric LDS with and without noise}
\label{app:learning_lds}

Figure \ref{fig:side_by_side} in the main paper compares SpectraLDS to other system-identification methods on the task of learning an arbitrary symmetric LDS with and without noise. The LDS signal has hidden dimension 256, input and output dimension 5, and maximum eigenvalue magnitude 0.99 with Gaussian initialization. Each step provides a Gaussian input sequence of length 100 with variance ${1/d_{in}}$ and the final output. For the learning with noise experiment, Gaussian noise with 0.5 variance was added to the hidden state at each step, and Gaussian noise with 5 variance was added to the output. Each architecture was trained with the default PyTorch RMSProp optimizer configuration, except for the LDS, which required a lower learning rate to converge. The Ho-Kalman matrices are recomputed every 20 steps for computational ease, while the other methods were updated every step, and the Ho-Kalman parameters are the maximum for an input of length 100 (that is T\_horizon = 99, T1 = 49, T2 = 49, state\_dim\_est = 48). The shaded region in the figure shows the 95\% confidence interval over 8 runs.

\subsection{Fit of the Spectral Filters with a Linear Dynamical System (State Dimension 160)}
\label{app:fit_with_lds_80}

\begin{figure}[H]
    \centering
    \includegraphics[width=0.9\linewidth]{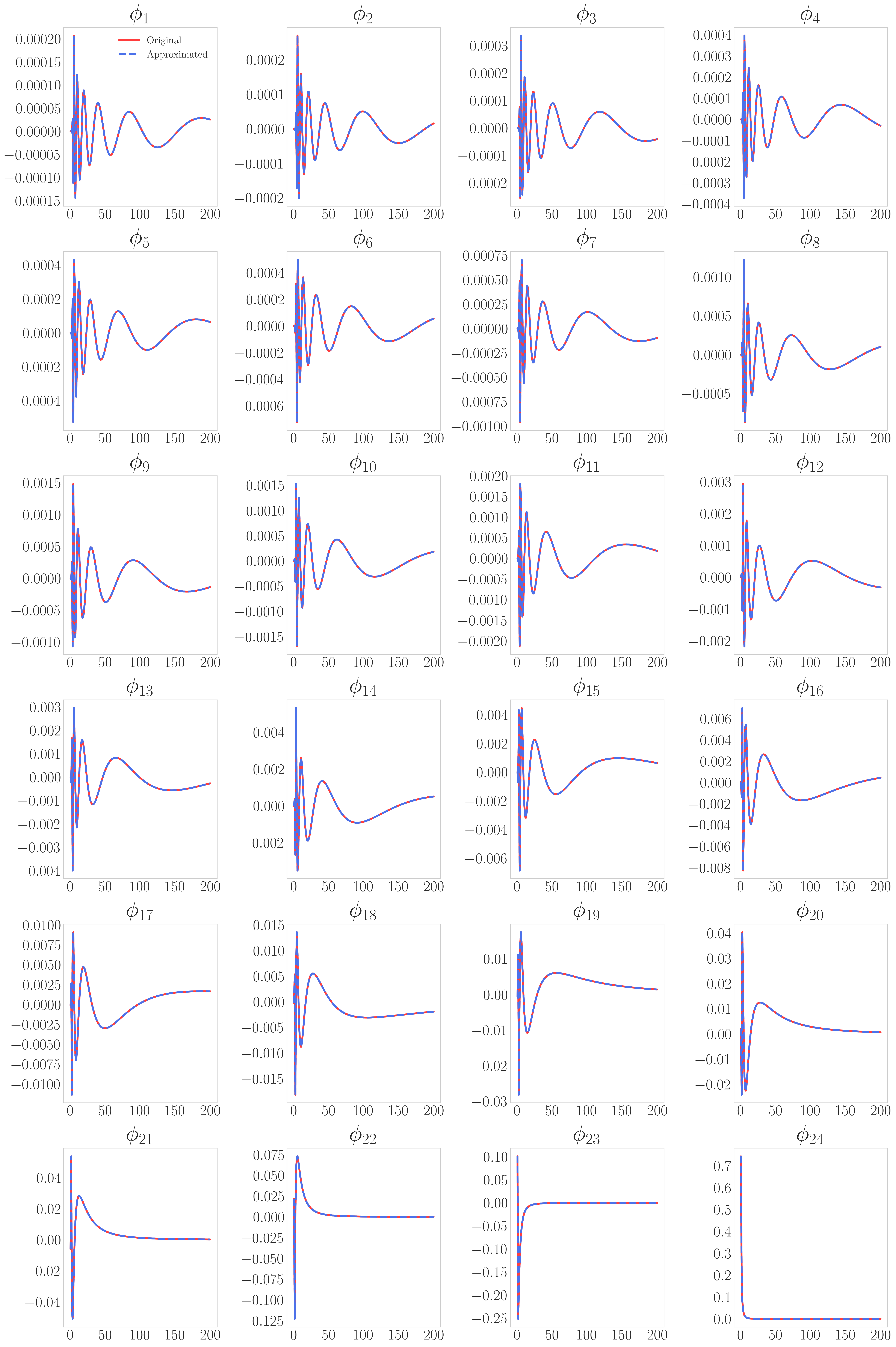}
    \caption{Visualization of the positive spectral filters and their approximation using an LDS with state dimension 160, as obtained via the practical algorithm. As described previously, only the first 80 dimensions of the state are used for approximate the Spectral Filters, with the remaining used for the alternating filters (negative filter maps), as described in Appendix \ref{app:eigenvals_lds}. The figure illustrates that even with a low-dimensional state, the LDS can accurately fit the spectral filters, confirming the efficacy of our distillation process.}
    \label{fig:filters_fit_sd80}
\end{figure}

\subsection{State Dynamics of the Linear Dynamical System}
\label{app:eigenvals_lds}
We train a linear dynamical system with state dimension 160 to fit the Spectral Filters and alternating filters using Algorithm~\ref{alg:spectralds} and examine the eigenvalues of the resulting system in Figure \ref{fig:eigenvals}. We only plot the 80 eigenvalues corresponding to the Spectral Filters, as the remaining 80 corresponding to the alternating filters are identical except multiplied by $-1$ (see below the plot). The reconstruction error of the spectral filters for the trained LDS is $7.69 \times 10^{-19}$. We transform the eigenvalues to visualize the full range of the values as they approach 1 and differentiate the negative and positive eigenvalues as blue and red (e.g, $0.97$ and $-0.97$ would both map to $\approx 3.5$, although the first would join a red column and the second blue).

\label{app:eigenvalue_dist}
\begin{figure}[h!]
    \centering
    \includegraphics[width=0.8\linewidth]{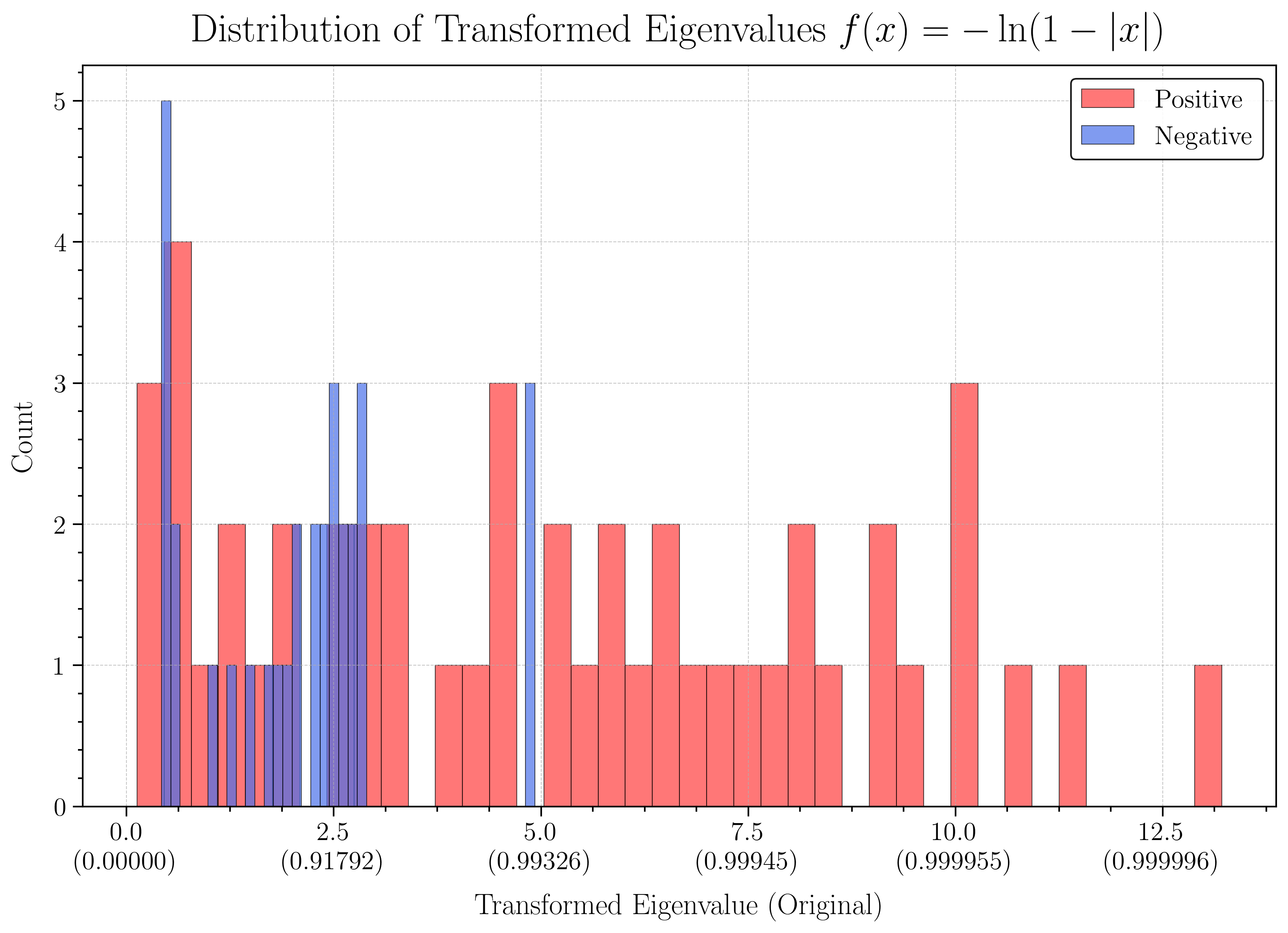}
    \caption{Distributions of Eigenvalues for LDS corresponding to the first 24 Spectral Filters.}
    \label{fig:eigenvals}
\end{figure}

For practical efficiency, we can compute all $U_{t,1}^{+}, \dots, U_{t,k}^{+}, U_{t,1}^{-}\dots U_{t,k}^{-}$ simultaneously with a single LDS parameterized by $\bigg\{\begin{bmatrix}
    \widetilde{M}\,\Gamma & \textbf{0} \\
    \textbf{0} & \widetilde{M}\,\Gamma
 \end{bmatrix},\begin{bmatrix}
    A & \textbf{0} \\
    \textbf{0} & -A
 \end{bmatrix},  \mathbf{1}_{2h}\bigg\}$. This LDS has a state dimension of $2hm$, where each of the $m$ input dimensions have a hidden state updated by $\begin{bmatrix}
    A & \textbf{0} \\
    \textbf{0} & -A
 \end{bmatrix}$. The hidden state dimension differs from $2h$ as we treat $m$ as a batch dimension, resulting in $Bu_t \in \mathbb{R}^{2h \times m}$ rather than $\mathbb{R}^{2h}$. This mimics an LDS acting independently across each of the $m$ dimensions, equivalent to treating $m$ as a batch-axis, and this is the analog to how each of the $k$ spectral filters convolve along each input dimension independently. The above Figure \ref{fig:eigenvals} showcases the eigenvalues of matrix $A$, whereas for practical deployment, the state matrix is $\begin{bmatrix}
    A & \textbf{0} \\
    \textbf{0} & -A
 \end{bmatrix}$.

\subsection{Reconstruction Error of the Spectral Filters varying LDS State Dimension and $H$} \label{appendix:error_vs_h}

We demonstrate the efficacy of the practical algorithm in fitting the Spectral Filters with a low initial LDS State Dimension. In Figure \ref{fig:ablate_H}, we ablate the choices of $H$ and $h$ for the practical algorithm. We do not fully understand why larger $H$ leads to worse fits for small state dimension. Notably, the error quickly becomes no longer human perceptible. In this procedure we only aim to fit the $k$ positive spectral filters. To fit both the $k$ positive spectral filters and $k$ negative spectral filters, we can achieve the same error with double the state dimension, as described in Appendix \ref{app:eigenvals_lds}. With our implementation of sparsity, the resulting hidden dimension will be near but not exactly that requested, which is why each line has different reconstruction dimensions.

\begin{figure}[H]
    \centering
    \includegraphics[width=0.8\linewidth]{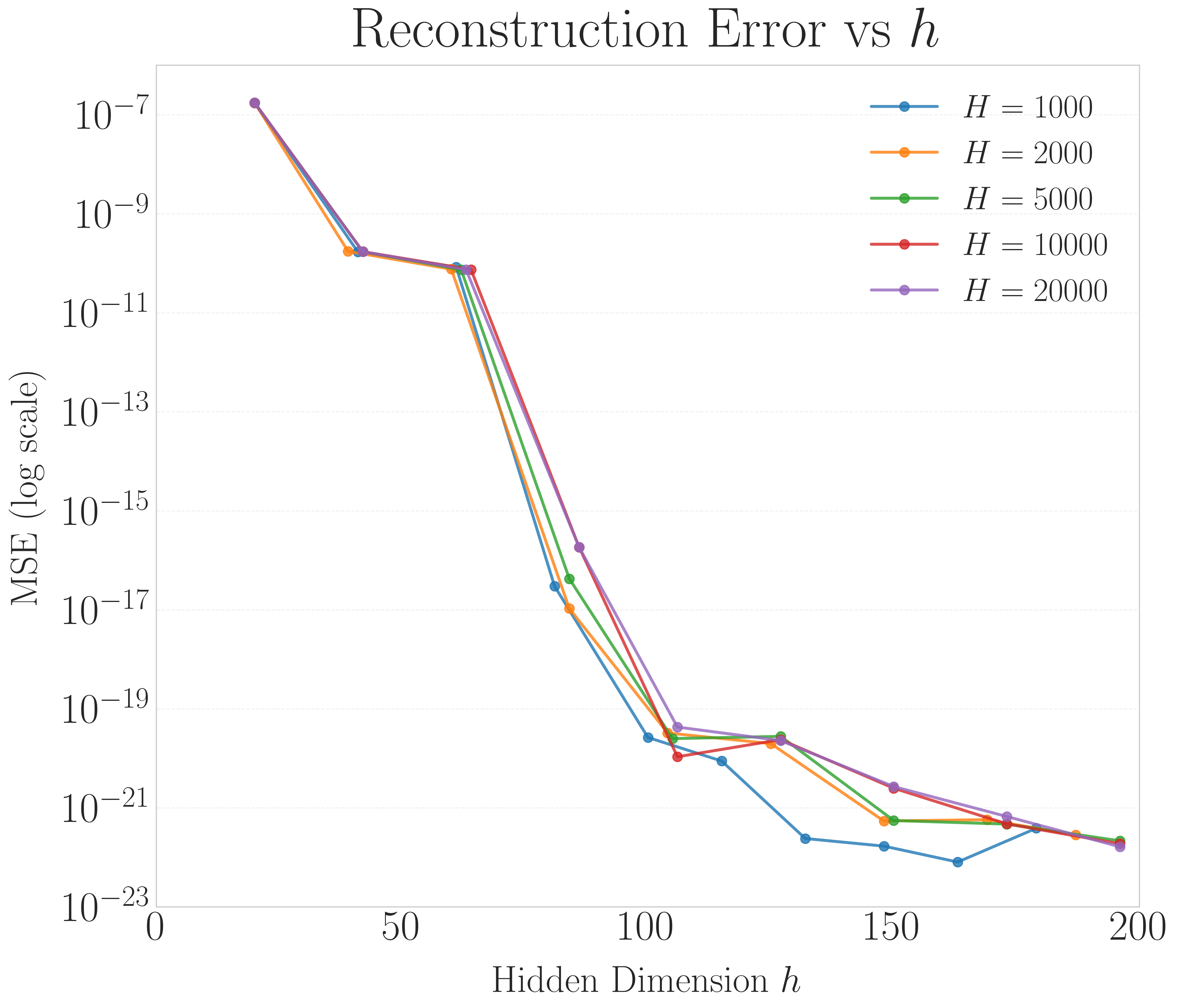}
    \caption{Reconstruction Error varying $H$ and LDS state dimension $h$.}
    \label{fig:ablate_H}
\end{figure}

\subsection{Reconstruction Error of the Spectral Filters with the Practical Algorithm and Uniform $\alpha$ Distribution}
\label{app:unif_alph_dist}

In Figure \ref{fig:unif_alpha}, we illustrate that if rather than choosing $(|\alpha_i| \stackrel{d}{=} 1-U^{4})$ with $U \sim \text{Unif}[0,1]$ with an independent random sign, we instead choose $\alpha_i \in [-1, 1]$, we require larger $H$ to achieve similar performance. At sufficiently large $H$, uniform sampling will have sufficient coverage of $\alpha$ near $1$ and $-1$. Thus, we instead prefer a distribution with a natural bias towards these extreme points, as $\alpha \in [-0.9, 0.9]$ decay towards $0$ quickly.

\begin{figure}[h]
    \centering
    \includegraphics[width=0.8\linewidth]{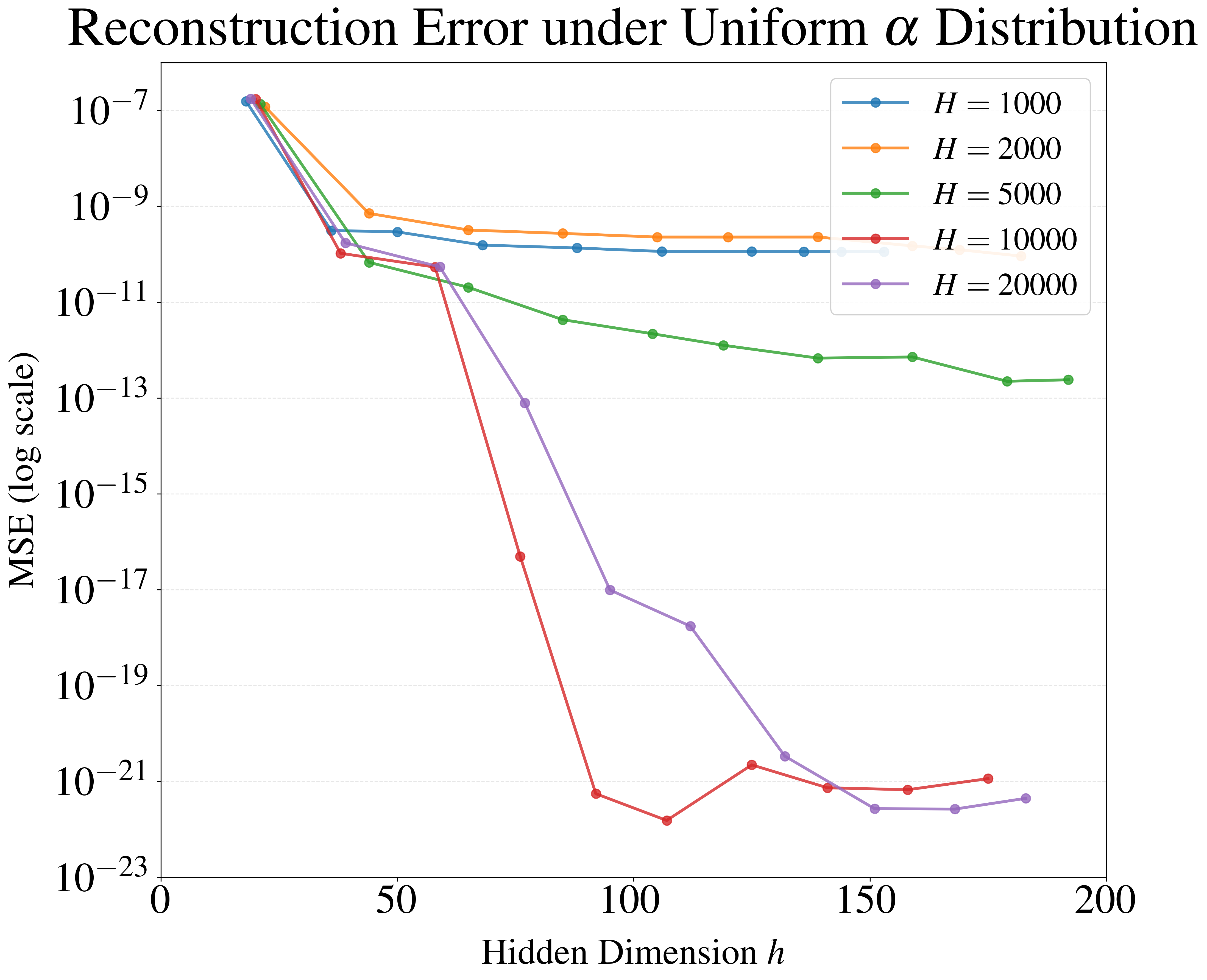}
    \caption{Reconstruction Error varying $H$ and LDS state dimension $h$ when choosing $\alpha \in \text{Unif}[-1, 1]$.}
    \label{fig:unif_alpha}
\end{figure}

\subsection{Experiments on Synthetic Tasks}
\label{app:synthetic}

We evaluate SpectraLDS (state dimension $160$) against a strong baseline on a synthetic task, reporting results in Table~\ref{tab:synth_results} and Figure~\ref{fig:loss_synth}. A symmetric linear dynamical system with input dimension $d_{\mathrm{in}}$ and state dimension $d_h$ is initialized by drawing all entries from a standard normal distribution, after which the update matrix $A$ is rescaled so its largest eigenvalue has magnitude $1-\delta$. We then distill a SpectraLDS model from an STU model trained with AdaGrad (learning rate $1.0$) for $2000$ steps; each step minimizes the mean-squared-error (MSE) between the STU output and the ground-truth LDS output over batches of $32$ sequences of length \textit{seq\_len}.  For the baseline, we fit a randomly initialized symmetric LDS with AdaGrad (learning rate $0.0001$) under the same loss. To keep training times practical, most baseline runs use a shorter sequence length, and several runs were stopped early due to computational limits, marked with an asterisk.  \textbf{When the state dimension satisfies $d_h \ge 1000$, baseline training with a learning rate of $0.01$ becomes unstable and often diverges, whereas the STU remains stable even with a learning rate of $1.0$.}

\begin{footnotesize}
\begin{table}[H]
\centering
\scriptsize
\resizebox{\textwidth}{!}{%
\begin{tabular}{llrrrrrrrrrr}
\toprule
\multicolumn{12}{c}{\textbf{Performance Results}} \\
\midrule
\textbf{Type} & \textbf{Len.} & \textbf{Delta} & \textbf{d\_in} & \textbf{d\_h} & \textbf{Avg MSE} & \textbf{MSE Std} & \textbf{Step 10 Loss} & \textbf{Step 100 Loss} & \textbf{Runs} & \textbf{Time (s)} & \textbf{Time Std (s)} \\
\midrule
LDS GD & 8192 & $1 \times 10^{-2}$ & 10 & 100 & $2.15 \times 10^{-2}$ & $1.32 \times 10^{-3}$ & $3.49 \times 10^{-1}$ & $1.43 \times 10^{-1}$ & 4* & 2153.12 & 14.16 \\
LDS GD & 8192 & $1 \times 10^{-3}$ & 10 & 100 & $2.53 \times 10^{-2}$ & $3.18 \times 10^{-3}$ & $4.53 \times 10^{-1}$ & $1.80 \times 10^{-1}$ & 3* & 2147.61 & 18.64 \\
LDS GD & 8192 & $1 \times 10^{-4}$ & 10 & 100 & $3.03 \times 10^{-2}$ & $6.09 \times 10^{-3}$ & $5.99 \times 10^{-1}$ & $2.29 \times 10^{-1}$ & 3* & 2150.73 & 28.05 \\
LDS GD & 1024 & $1 \times 10^{-2}$ & 10 & 100 & $9.54 \times 10^{-3}$ & $0.00 \times 10^{0}$ & $1.18 \times 10^{-1}$ & $6.72 \times 10^{-2}$ & 1* & 248.33 & 0.00 \\
LDS GD & 1024 & $1 \times 10^{-2}$ & 10 & 1000 & $1.90 \times 10^{-5}$ & $5.93 \times 10^{-6}$ & $5.25 \times 10^{-4}$ & $3.39 \times 10^{-5}$ & 5 & 252.24 & 1.64 \\
LDS GD & 1024 & $1 \times 10^{-3}$ & 10 & 100 & $7.34 \times 10^{-3}$ & $1.02 \times 10^{-3}$ & $1.06 \times 10^{-1}$ & $4.97 \times 10^{-2}$ & 2* & 248.68 & 0.13 \\
LDS GD & 1024 & $1 \times 10^{-3}$ & 10 & 1000 & $2.40 \times 10^{-5}$ & $1.71 \times 10^{-6}$ & $4.00 \times 10^{-4}$ & $3.71 \times 10^{-5}$ & 5 & 250.24 & 0.87 \\
LDS GD & 1024 & $1 \times 10^{-4}$ & 10 & 100 & $7.72 \times 10^{-3}$ & $2.52 \times 10^{-4}$ & $1.13 \times 10^{-1}$ & $5.31 \times 10^{-2}$ & 2* & 248.32 & 0.38 \\
LDS GD & 1024 & $1 \times 10^{-4}$ & 10 & 1000 & $2.01 \times 10^{-5}$ & $4.53 \times 10^{-6}$ & $5.29 \times 10^{-4}$ & $3.53 \times 10^{-5}$ & 5 & 250.58 & 1.40 \\
LDS GD & 1024 & $1 \times 10^{-5}$ & 10 & 100 & $6.77 \times 10^{-3}$ & $6.43 \times 10^{-4}$ & $1.01 \times 10^{-1}$ & $4.84 \times 10^{-2}$ & 5 & 250.80 & 0.17 \\
LDS GD & 1024 & $1 \times 10^{-5}$ & 10 & 1000 & $1.94 \times 10^{-5}$ & $4.92 \times 10^{-6}$ & $4.79 \times 10^{-4}$ & $3.39 \times 10^{-5}$ & 5 & 251.51 & 0.67 \\
SpectraLDS & 8192 & $1 \times 10^{-2}$ & 10 & 100 & $3.35 \times 10^{-4}$ & $9.84 \times 10^{-5}$ & $8.27 \times 10^{-3}$ & $3.74 \times 10^{-4}$ & 5 & 51.85 & 10.36 \\
SpectraLDS & 8192 & $1 \times 10^{-2}$ & 10 & 1000 & $1.86 \times 10^{-5}$ & $2.27 \times 10^{-6}$ & $7.99 \times 10^{-3}$ & $5.48 \times 10^{-5}$ & 5 & 51.14 & 0.04 \\
SpectraLDS & 8192 & $1 \times 10^{-3}$ & 10 & 100 & $3.73 \times 10^{-4}$ & $1.57 \times 10^{-4}$ & $9.32 \times 10^{-3}$ & $4.12 \times 10^{-4}$ & 5 & 63.92 & 24.06 \\
SpectraLDS & 8192 & $1 \times 10^{-3}$ & 10 & 1000 & $1.73 \times 10^{-5}$ & $2.27 \times 10^{-6}$ & $8.46 \times 10^{-3}$ & $5.49 \times 10^{-5}$ & 5 & 51.93 & 1.55 \\
SpectraLDS & 8192 & $1 \times 10^{-4}$ & 10 & 100 & $4.73 \times 10^{-4}$ & $1.91 \times 10^{-4}$ & $9.81 \times 10^{-3}$ & $5.11 \times 10^{-4}$ & 5 & 52.23 & 10.66 \\
SpectraLDS & 8192 & $1 \times 10^{-4}$ & 10 & 1000 & $1.84 \times 10^{-5}$ & $4.53 \times 10^{-6}$ & $9.30 \times 10^{-3}$ & $5.54 \times 10^{-5}$ & 5 & 51.15 & 0.07 \\
SpectraLDS & 8192 & $1 \times 10^{-5}$ & 10 & 100 & $2.94 \times 10^{-4}$ & $8.78 \times 10^{-5}$ & $1.18 \times 10^{-2}$ & $3.30 \times 10^{-4}$ & 5 & 51.69 & 10.37 \\
SpectraLDS & 8192 & $1 \times 10^{-5}$ & 10 & 1000 & $1.66 \times 10^{-5}$ & $4.47 \times 10^{-6}$ & $9.00 \times 10^{-3}$ & $5.27 \times 10^{-5}$ & 5 & 51.15 & 0.07 \\
SpectraLDS & 8192 & $1 \times 10^{-6}$ & 10 & 100 & $4.05 \times 10^{-4}$ & $3.61 \times 10^{-5}$ & $1.07 \times 10^{-2}$ & $4.46 \times 10^{-4}$ & 5 & 46.53 & 0.03 \\
SpectraLDS & 8192 & $1 \times 10^{-6}$ & 10 & 1000 & $1.96 \times 10^{-5}$ & $1.27 \times 10^{-6}$ & $8.55 \times 10^{-3}$ & $5.63 \times 10^{-5}$ & 5 & 51.95 & 1.63 \\
SpectraLDS & 8192 & $1 \times 10^{-7}$ & 10 & 100 & $4.50 \times 10^{-4}$ & $1.73 \times 10^{-4}$ & $9.63 \times 10^{-3}$ & $4.87 \times 10^{-4}$ & 5 & 46.51 & 0.06 \\
SpectraLDS & 8192 & $1 \times 10^{-7}$ & 10 & 1000 & $1.77 \times 10^{-5}$ & $1.95 \times 10^{-6}$ & $9.45 \times 10^{-3}$ & $5.46 \times 10^{-5}$ & 5 & 51.96 & 1.64 \\

\bottomrule
\end{tabular}%
}
\vspace{4pt}
\caption{SpectraLDS performance on learning synthetic linear dynamical systems with maximum eigenvalue $1 - \delta$ against a strong baseline.}
\label{tab:synth_results}
\end{table}
\end{footnotesize}

\begin{figure}[H]
    \centering
    \includegraphics[width=0.75\linewidth]{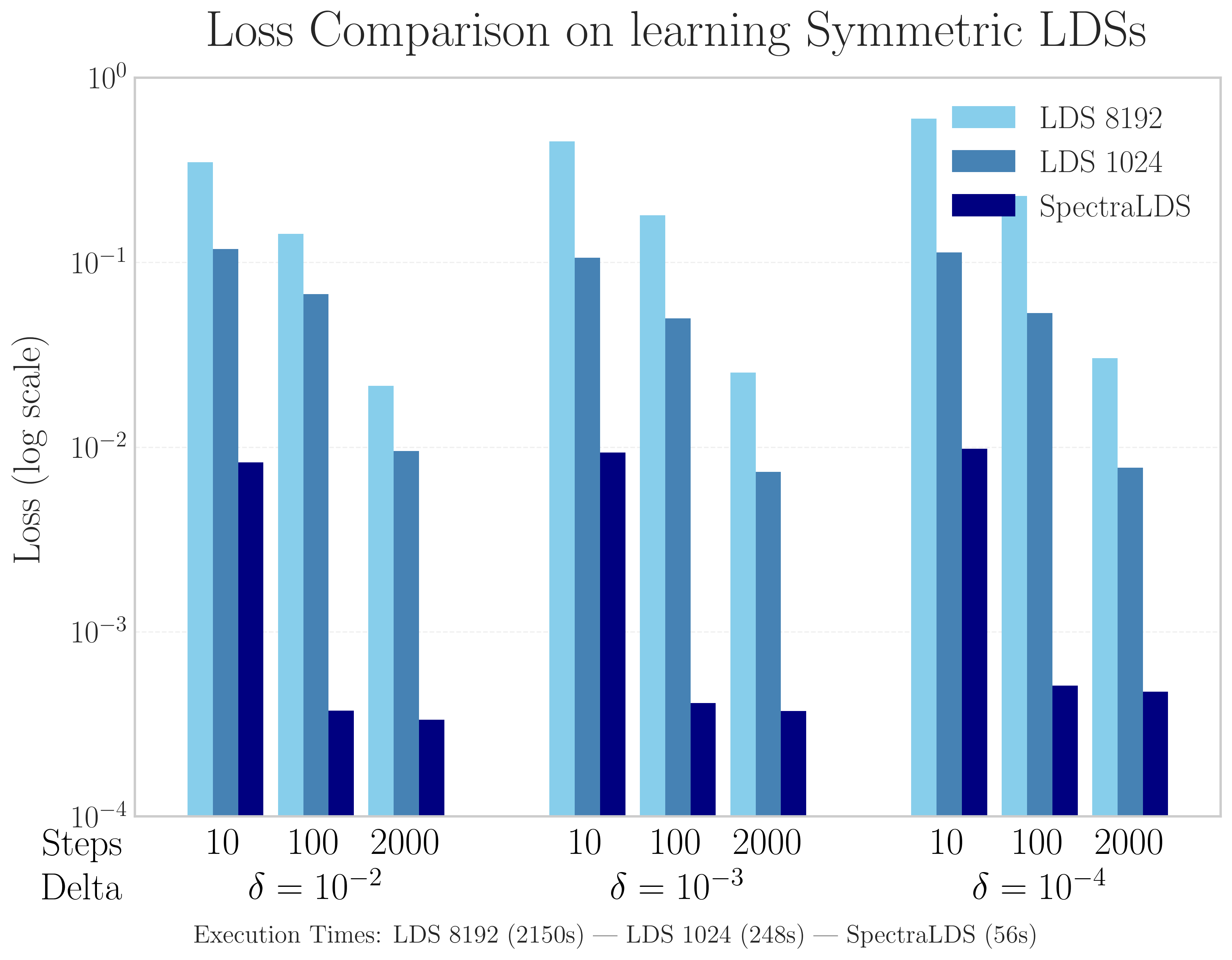}
    \caption{Comparison of learning performance on synthetic LDS tasks between SpectraLDS and a gradient-descent-updated LDS baseline. The plot shows the loss at different training steps under varying $\delta$, input/output dimensions, and sequence lengths. SpectraLDS consistently achieves lower loss with significantly reduced runtime, especially at larger output dimensions. Full experimental details are provided in Table~\ref{tab:synth_results}.}
    \label{fig:loss_synth}
\end{figure}

\subsection{Layer Speeds of SpectraLDS and STU}
\label{app:layerspeed}

We benchmark the inference speed of a single SpectraLDS layer across several state dimensions and compare it with two accelerated schemes for computing the STU convolution: Epoched Future Fill~\cite{agarwal2024futurefillfastgenerationconvolutional} and an STU that employs the tensor-dot approximation~\cite{liu2024flashstu}.  The SpectraLDS layer itself is produced by applying STU-to-LDS distillation to the tensor-dot STU. For the timings reported in Table~\ref{tab:inference_times_nograd}, each model is evaluated on $L$ (Seq. Len.) autoregressive convolutions on inputs of dimension $128$, and the mean and standard deviation over five runs are recorded.  All STU variants use a filter length of $1{,}048{,}576$ to accommodate the longest sequences.  Although every architecture begins in the linear-scaling regime, only SpectraLDS continues to scale favorably as the sequence length increases.  The resulting layer-speed comparison is visualized in Figure~\ref{fig:inference_spectra}; benchmarks of SpectraLDS layers embedded in large language-model architectures appear in Appendix~\ref{app:flashstu}.

\begin{table}[H]
\centering
\scriptsize
\resizebox{\textwidth}{!}{%
\begin{tabular}{r|rr|rr|rr|rr|rr}
\toprule
\multicolumn{11}{c}{\textbf{Inference Time Performance (ms)}} \\
\midrule
\multirow{2}{*}{\textbf{Seq. Len.}} & \multicolumn{2}{c|}{\textbf{STU Future Fill}} & \multicolumn{2}{c|}{\textbf{STU Tensor-Dot}} & \multicolumn{2}{c|}{\textbf{STU}} & \multicolumn{2}{c|}{\textbf{SpectraLDS (SD 100)}} & \multicolumn{2}{c}{\textbf{SpectraLDS (SD 800)}} \\
\cmidrule{2-11}
 & \textbf{Mean} & \textbf{Std} & \textbf{Mean} & \textbf{Std} & \textbf{Mean} & \textbf{Std} & \textbf{Mean} & \textbf{Std} & \textbf{Mean} & \textbf{Std} \\
\midrule
32 & 30.86 & 1.20 & 31.87 & 6.43 & 134.96 & 22.13 & 21.27 & 3.86 & 18.98 & 0.73 \\
64 & 36.52 & 0.72 & 28.92 & 0.42 & 123.16 & 2.54 & 22.40 & 0.14 & 22.27 & 0.29 \\
128 & 49.87 & 0.52 & 40.20 & 6.63 & 135.94 & 2.28 & 29.49 & 0.18 & 29.72 & 0.58 \\
256 & 75.73 & 0.35 & 45.98 & 1.12 & 162.74 & 2.90 & 43.54 & 0.27 & 43.62 & 0.47 \\
512 & 136.67 & 7.90 & 70.24 & 0.61 & 211.66 & 2.99 & 71.40 & 0.56 & 73.01 & 0.88 \\
1024 & 234.10 & 6.99 & 122.89 & 1.19 & 317.91 & 4.79 & 127.52 & 1.25 & 128.98 & 0.97 \\
2048 & 441.09 & 3.69 & 226.47 & 8.02 & 525.65 & 5.09 & 241.16 & 4.43 & 252.58 & 8.78 \\
4096 & 863.81 & 8.72 & 426.43 & 7.96 & 936.94 & 6.53 & 472.15 & 8.40 & 477.96 & 8.42 \\
8192 & 1856.63 & 191.17 & 837.46 & 2.03 & 1772.30 & 20.46 & 921.55 & 10.23 & 929.06 & 9.76 \\
16384 & 3377.29 & 33.16 & 1812.17 & 16.47 & 3434.19 & 30.83 & 1839.63 & 25.21 & 1846.04 & 17.43 \\
32768 & 7139.35 & 515.40 & 4286.86 & 24.74 & 6721.83 & 19.65 & 3620.84 & 43.71 & 3686.90 & 26.45 \\
65536 & 13485.25 & 234.22 & 11614.43 & 28.04 & 13478.53 & 89.72 & 7181.13 & 35.96 & 7362.73 & 78.96 \\
131072 & 27252.83 & 240.34 & 36427.10 & 14.56 & 26703.77 & 323.97 & 14356.92 & 74.91 & 14649.56 & 162.32 \\
262144 & 63437.26 & 123.25 & 117796.38 & 268.48 & 55775.49 & 275.16 & 28573.69 & 119.09 & 29156.42 & 113.55 \\
524288 & 177502.89 & 651.96 & 365168.31 & 460.23 & 142896.29 & 621.72 & 57083.23 & 404.91 & 58027.04 & 276.39 \\
1048576 & 576129.57 & 1425.12 & 1145862.83 & 1805.50 & 451607.11 & 721.73 & 114044.79 & 1215.07 & 115999.77 & 670.78 \\
\bottomrule
\end{tabular}
}
\vspace{5pt}
\caption{Autoregressive Inference Time (ms) across model architectures (5 runs).}
\label{tab:inference_times_nograd}
\end{table}

\begin{figure}[h!]
    \centering
    \includegraphics[width=0.8\linewidth]{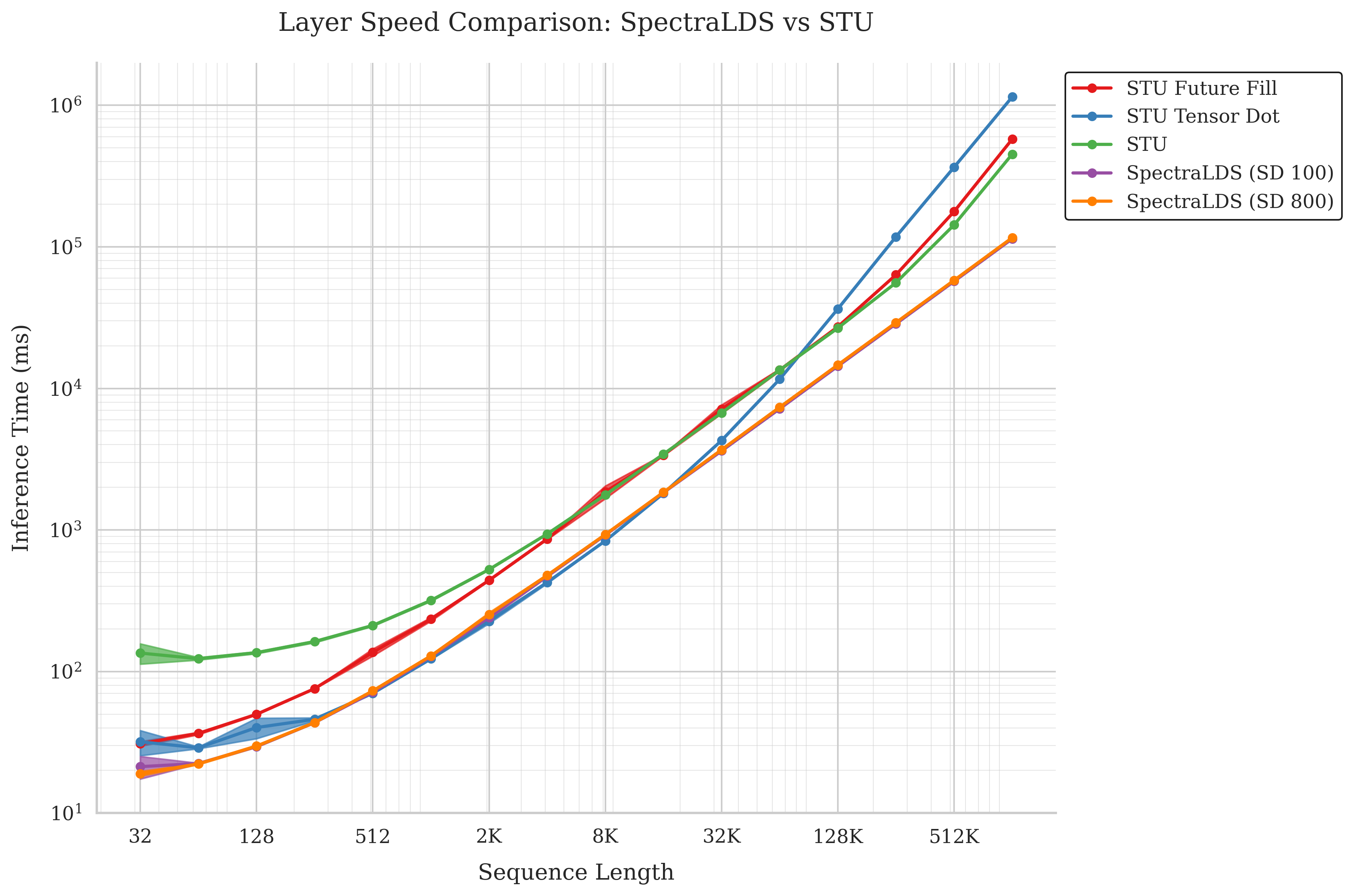}
    \caption{Autoregressive Inference Time (ms) across model architectures.}
    \label{fig:inference_spectra}
\end{figure}

\subsection{FlashSTU Ablations}
\label{app:flashstu}
\subsubsection{Implementation Architecture Details}

To perform the FlashSTU performance evaluations with STU-T, we employ the architecture depicted in Figure \ref{fig:flash_stu}. We test token generation efficiency with a hybrid architecture, alternating between layers using STU-T and sliding window attention (SWA), and we additionally test with each layer using STU-T only. Each layer consists of RMS-Norm, followed by STU-T or SWA, followed by RMS-Norm, and then followed by an MLP. Inputs are tokenized by the o200k\_base tokenizer and the FlashSTU model begins with an embedding layer, which is weight-tyed to the output unembedding layer. To start generation, we add special tokens such as \texttt{<|endoftext|>} and \texttt{<|endofprompt|>}. \\

The sliding window attention layers leverage Flash Attention v2 \cite{dao2022flashattention, dao2023flashattention2} and ALiBi position embeddings \cite{press2022trainshorttestlong}. The tested model has input dimension $896$ and $12$ layers, which has $550.31$ million parameters for the hybrid model, and $535.99$ million parameters for the STU-only model. All layers are run in bfloat16 except for the LDS distilled layers, which require float64 precision. The Flash STU-T leverages the STU with tensor-dot \cite{liu2024flashstu} approximation rather than the base STU layer for faster inference, and thus we perform the STU-to-LDS distillation on the STU with tensor-dot approximation. For tests with generation length up to $131072$, we leverage STU filter length of $131072$. For the generation length of $262144$, we leverage an STU filter length of $262144$. For each runtime measurement, we first run a warmup generation, before reporting the mean of two generations of that length. All benchmarks include only inference time and not model setup or filter computation costs. Additionally, each MLP layer has hidden dimension $12\times$ the input dimension (MLP expansion factor). Other configuration details are identical to those in Table \ref{tab:model_deets}.

\begin{figure}[H]
    \centering
\includegraphics[width=0.3\linewidth]{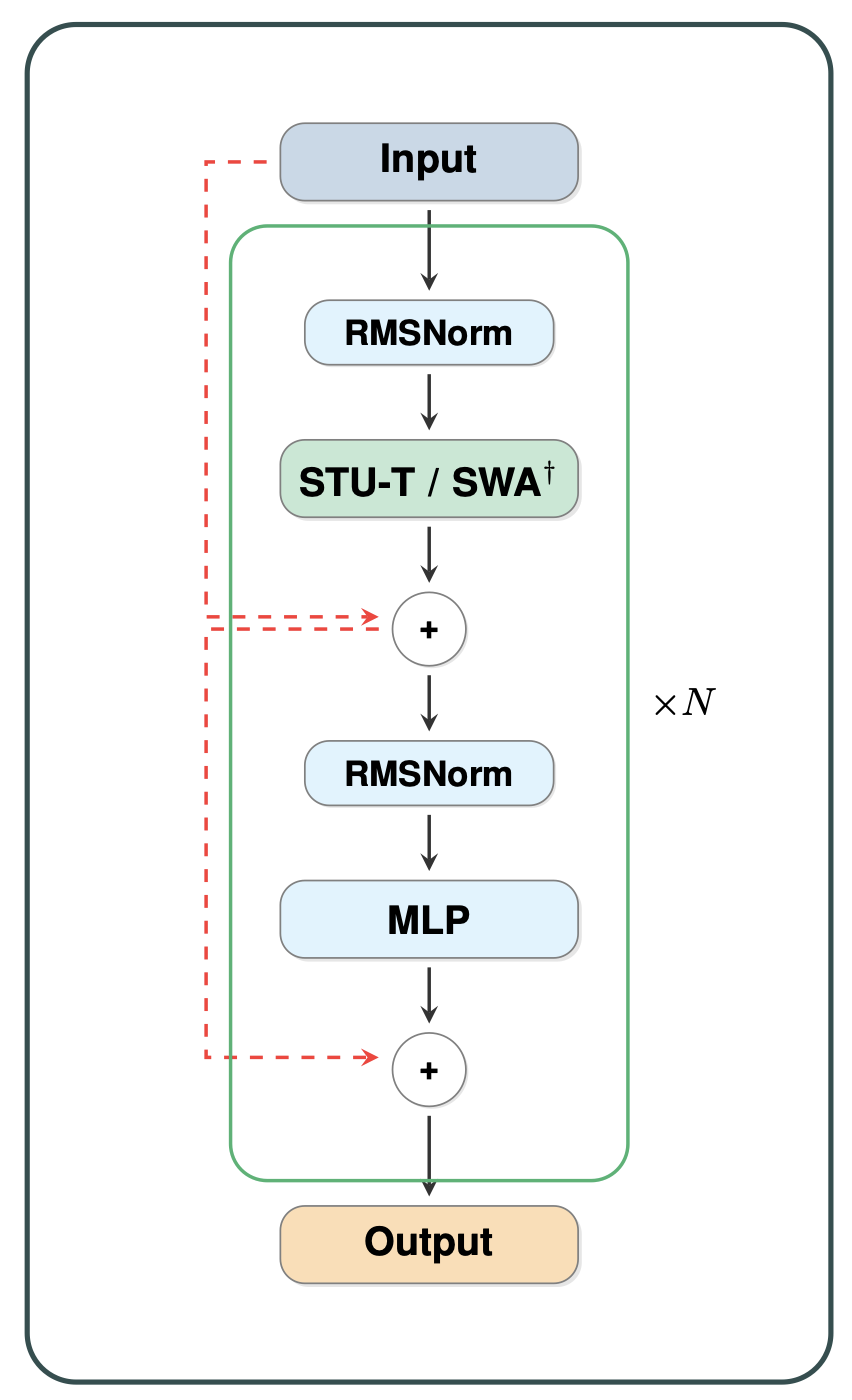}
    \caption{FlashSTU architecture \cite{liu2024flashstu}.}
    \label{fig:flash_stu}
\end{figure}

\subsubsection{Implementation Efficiency}
\label{app:full_speed}
 
Using the setup of Fig.~\ref{fig:flash_stu}, we time autoregressive generation for sequence lengths $4{,}096$–$262{,}144$ tokens under two architectures—(i) a hybrid network that interleaves STU-T and sliding-window attention layers and (ii) an STU-only network in which every layer is STU-T.  Tables~\ref{tab:hybrid_model_performance}–\ref{tab:stu_only_performance} report mean runtimes (over two runs after a warmup) for SpectraLDS with state dimensions $100$–$800$ alongside the STU-T with naive convolutions (Base STU) and the Epoched FutureFill method. We note three main results. 
First, SpectraLDS runtimes grow nearly linearly with sequence length and are virtually independent of the chosen state dimension. Second, while the STU-T with naive convolutions is competitive at $4$–$8$ k tokens, it becomes progressively slower, so that SpectraLDS is $\approx\!2\times$ faster by $65$ k tokens and over $4\times$ faster at $262$ k tokens in the STU-only setting (and $2\times$ faster in the hybrid setting).  
Third, Epoched FutureFill narrows the gap at medium lengths but is still outpaced by SpectraLDS beyond $131$ k tokens and, in the STU-Only architecture, exhausts memory (OOM) at $262$ k tokens, whereas SpectraLDS completes the run.  Together these results demonstrate that SpectraLDS delivers the most favorable long-context scaling and remains robust across model hyper-parameters.

\begin{table}[H]
\centering
\scriptsize
\resizebox{\textwidth}{!}{%
\begin{tabular}{r|r|r|r|r|r|r}
\toprule
\textbf{Seq. Len.} & \textbf{LDS SD 100} & \textbf{LDS SD 200} & \textbf{LDS SD 400} & \textbf{LDS SD 800} & \textbf{Base STU} & \textbf{FutureFill} \\
\midrule
4096 & 20.34 & 20.71 & 20.40 & 20.40 & 18.91 & 19.92 \\
8192 & 40.48 & 41.32 & 40.69 & 40.82 & 38.09 & 38.39 \\
16384 & 80.78 & 82.90 & 80.95 & 81.63 & 76.49 & 74.89 \\
32768 & 161.43 & 163.92 & 162.13 & 163.30 & 164.70 & 141.16 \\
65536 & 323.40 & 327.31 & 323.90 & 325.92 & 389.72 & 290.68 \\
131072 & 646.66 & 653.96 & 648.99 & 651.35 & 1014.67 & 666.24 \\
262144 & 1588.81 & 1639.24 & 1591.50 & 1536.20 & 3113.45 & 2498.89 \\
\bottomrule
\end{tabular}%
}
\vspace{5pt}
\caption{Hybrid Model Runtime (seconds) for generation across SpectraLDS with different state dimensions and Baseline and Epoched FutureFill implementations}
\label{tab:hybrid_model_performance}
\end{table}

\begin{table}[H]
\centering
\scriptsize

\resizebox{\textwidth}{!}{%
\begin{tabular}{r|r|r|r|r|r|r}
\toprule
\textbf{Seq. Len.} & \textbf{LDS SD 100} & \textbf{LDS SD 200} & \textbf{LDS SD 400} & \textbf{LDS SD 800} & \textbf{Base STU} & \textbf{FutureFill} \\
\midrule
4096 & 14.04 & 13.87 & 13.87 & 13.74 & 12.14 & 19.35 \\
8192 & 28.15 & 27.74 & 27.67 & 27.42 & 24.84 & 35.86 \\
16384 & 55.71 & 55.51 & 55.34 & 54.73 & 57.32 & 67.96 \\
32768 & 110.85 & 110.88 & 110.57 & 108.68 & 149.12 & 135.75 \\
65536 & 220.11 & 221.55 & 221.13 & 215.00 & 429.20 & 329.91 \\
131072 & 439.87 & 442.65 & 442.04 & 428.36 & 1352.40 & 952.70 \\
262144 & 1104.97 & 1101.76 & 1132.18 & 1107.49 & 4672.60 & OOM \\
\bottomrule
\end{tabular}%
}
\vspace{5pt}
\caption{STU-Only Runtime (seconds) for generation across SpectraLDS with different state dimensions and Baseline and Epoched FutureFill implementations}
\label{tab:stu_only_performance}
\end{table}

\subsection{Details of STU Model In Experiments}
\label{app:modeltable}
We summarize in Table \ref{tab:model_deets} all relevant details for the FlashSTU model used in the language evaluations in Table \ref{tab:comparison}. The distilled LDS layer used in the language benchmarking experiments was obtained by Algorithm \ref{alg:spectralds} and has a state dimension of 160, incorporating both positive and negative spectral components. The weights for the distilled model were directly transferred from the FlashSTU model described below. The FlashSTU architecture is further described in Appendix \ref{app:flashstu} and graphically shown in Figure \ref{fig:flash_stu}.

\begin{table}[h]
\centering
\scriptsize
\resizebox{\textwidth}{!}{%
\begin{tabular}{llc}
\toprule
\multicolumn{3}{c}{\textbf{Model Architecture}} \\
\midrule
& \textbf{Description} & \textbf{Flash STU} \\
\midrule
\textbf{Parameter Count} & Total number of parameters & 340M \\
\textbf{Embedding Dimension}  & Dimensionality of embedding space & 1024 \\
\textbf{Number of Heads} & Attention heads  & 4                \\
\textbf{Number of Layers}  & Transformer + STU layers & 12 \\
\textbf{ALiBi Attention} & Attention scores modification using linear biases  & No               \\
\textbf{RoPE Theta} & RoPE scaling factor for rotary embeddings & 10,000 \\
\textbf{Sliding Window Size} & Sliding window attention context lookback size & 512            \\
\textbf{Sequence Length (Training)}   
& Input sequence length during training & 4,096       \\
\textbf{Sequence Length (Inference)}  
& Input sequence length during inference via position interpolation & 131,072           \\
\textbf{Vocabulary Size} & Size of the model's vocabulary & 200,064          \\
\textbf{MLP Expansion Factor} & Expansion factor in MLP layers & 4               \\
\textbf{Bias} & Use of bias terms in linear layers & No \\
\textbf{Dropout} & Dropout rate & 0.0              \\
\textbf{Number of Filters} & Number of filters & 24  \\
\midrule
\multicolumn{3}{c}{\textbf{Training and Optimization}} \\
\midrule
\textbf{Epochs} & Number of training epochs  & 1 \\
\textbf{Global Batch Size} & Number of tokens processed per step & 524,288          \\
\textbf{Micro Batch Size} & Batch size per GPU  & 8  \\
\textbf{Gradient Accumulation Steps} & Number of steps before performing a gradient update  & 8                \\
\textbf{Warmup Steps}  & Number of warmup steps & 1,907  \\
\textbf{Evaluation Period} & Evaluation frequency (steps) & 50               \\
\textbf{Max Grad Norm} & Maximum gradient norm for clipping & 1.0              \\
\midrule
\multicolumn{3}{c}{\textbf{Optimizer Configuration}} \\
\midrule
\textbf{Optimizer}                
& Optimizer type & AdamW              \\
\textbf{Learning Rate Schedule}   
& LR scheduling strategy                    
& Linear decay with warmup \\
\textbf{Max Learning Rate}        
& Maximum learning rate                     
& $4.0 \times 10^{-4}$ \\
\textbf{Min Learning Rate}        
& Minimum learning rate                     
& $4.0 \times 10^{-5}$ \\
\textbf{Torch Dtype}             
& Data type for PyTorch tensors             
& \texttt{bfloat16}  \\
\textbf{Betas}                    
& Optimizer betas                           
& (0.9, 0.999)       \\
\textbf{Epsilon}                  
& Optimizer epsilon                         
& $1.0 \times 10^{-8}$ \\
\textbf{Weight Decay}             
& Weight decay factor                       
& $1.0 \times 10^{-2}$ \\
\textbf{AMSGrad}                  
& Use AMSGrad variant                       
& No                 \\
\textbf{Fused}                    
& Use fused optimizer                       
& Yes                \\
\midrule
\multicolumn{3}{c}{\textbf{Optimization Techniques}} \\
\midrule
\textbf{Activation Checkpointing} 
& Enable activation checkpointing           
& Yes                \\
\textbf{Use Flash FFT}            
& Enable Flash FFT        
& No                \\
\textbf{Use Tensordot Approx.}    
& Enable tensordot approximation            
& Yes                \\
\textbf{Use Attention}            
& Enable attention mechanism                
& Yes                \\
\textbf{Softcap}                  
& Softcap threshold                         
& 50.0               \\
\textbf{Torch Compile}            
& Enable Torch compile optimization         
& Yes                 \\
\bottomrule
\end{tabular}%
}
\vspace{5pt}
\caption{Model and training configuration details for the 340M Flash STU model.}
\label{tab:model_deets}
\end{table}

Benchmarks were conducted with task-specific few-shot configurations. For each task, if the few-shot value is not set to -1, that number of examples is provided during evaluation; otherwise, the few-shot setting is left unchanged. Specifically, the few-shot settings are as follows: HellaSwag: 0 shots, MMLU: 5 shots, PIQA: 0 shots, BoolQ: 0 shots, Winogrande: -1, Commonsense\_QA: 7 shots, OpenBookQA: -1, ARC-Easy: -1, ARC-Challenge: -1. The model employs bfloat16 precision except for the LDS computations, where it requires float64 precision, and a global batch size of 3 examples per task. All computations were performed on a single H100 GPU.

\subsection{Example Generation}

The following is an example completion from the FlashSTU 340M model after LDS distillation with temperature $1.0$:

\textbf{Prompt: } After school each day, Deidre Smith would look over the essays her students submitted at the end of class and, without fail, each day Alex would surprise her. Today, he wrote about\\

\textbf{Generated Text:}  After school each day, Deidre Smith would look over the essays her students submitted at the end of class and, without fail, each day Alex would surprise her. Today, he wrote about 40 essays from his freshman class, all of which were collected over the past three years for The Book of Sentiments: What Students Should Do When Reading Literature – First published in 1962, this book was very well received. What people who read this book would not immediately accept or appreciate the author’s thought-provoking message of what it means to read literature.

What exactly are you doing with your newfound ability to read? You are reading? What emotions do you feel at the time? What do you do with your newfound ability to read? These are just a few examples of how much reading students think they can gain from the experience. It is no accident, then, that each and every level of student will have difficulty with the text in this book. Students are given a choice: choose an assignment that you are comfortable with and one that you would like to read more deeply and critically. As the task to read becomes more difficult and the pressure for it is much greater, students will struggle to put their best minds together when reading literature. To help students overcome this hurdle, I have devised a book that makes the task a bit easier by providing all the information they need from a reader. If a reader is too afraid to read, I have designed a page that provides guidance from the textbook that will...

\clearpage

\end{document}